% This is samplepaper.tex, a sample chapter demonstrating the
% LLNCS macro package for Springer Computer Science proceedings;
% Version 2.21 of 2022/01/12
%
\documentclass[runningheads]{llncs}
\usepackage[T1]{fontenc}
\usepackage{amsmath,amsfonts}
\usepackage{graphicx}  % Include graphics
\usepackage{xcolor}    % Colored text
\usepackage{subcaption}
\usepackage{wrapfig}
% T1 fonts will be used to generate the final print and online PDFs,
% so please use T1 fonts in your manuscript whenever possible.
% Other font encondings may result in incorrect characters.
%
\usepackage{graphicx}
\usepackage{cite}
%\usepackage{color}
%\renewcommand\UrlFont{\color{blue}\rmfamily}
%\urlstyle{rm}
%

\makeatletter
\renewcommand\section{\@startsection{section}{1}{\z@}%
                       {12\p@ \@plus 4\p@ \@minus 4\p@}%
                       {8\p@ \@plus 4\p@ \@minus 4\p@}%
                       {\normalfont\large\bfseries\boldmath
                        \rightskip=\z@ \@plus 8em\pretolerance=10000 }}
\renewcommand\subsection{\@startsection{subsection}{2}{\z@}%
                       {8\p@ \@plus 4\p@ \@minus 4\p@}%
                       {6\p@ \@plus 4\p@ \@minus 4\p@}%
                       {\normalfont\normalsize\bfseries\boldmath
                        \rightskip=\z@ \@plus 8em\pretolerance=10000 }}
\renewcommand\subsubsection{\@startsection{subsubsection}{3}{\z@}%
                       {4 \p@ \@plus 4\p@ \@minus 4\p@}%
%                       {0 \p@ \@plus 4\p@ \@minus 4\p@}%
{-0.5em \@plus -0.22em \@minus -0.1em}%
                       {\normalfont\normalsize\bfseries\boldmath
                        \rightskip=\z@ \@plus 8em\pretolerance=10000 }}
%                       {-18\p@ \@plus -4\p@ \@minus -4\p@}%
%                       {-0.5em \@plus -0.22em \@minus -0.1em}%
%                       {\normalfont\normalsize\bfseries\boldmath}}
\renewcommand\paragraph{\@startsection{paragraph}{4}{\z@}%
                       {-12\p@ \@plus -4\p@ \@minus -4\p@}%
                       {-0.5em \@plus -0.22em \@minus -0.1em}%
                       {\normalfont\normalsize\itshape}}
\makeatother

\begin{document}
\title{Group-Control Motion Planning Framework for Microrobot Swarms in a Global Field}
\titlerunning{Group-Control for Microrobot Swarms}
% If the paper title is too long for the running head, you can set
% an abbreviated paper title here
%
\author{Siyu Li\inst{1}
\and
Afagh Mehri Shervedani\inst{1} 
\and
Milo\v s \v Zefran\inst{1}
\and
Igor Paprotny \inst{1}}
%
% \orcidID{0009-0007-9554-0681}
\authorrunning{S. Li et al.}
% First names are abbreviated in the running head.
% If there are more than two authors, 'et al.' is used.
%
\institute{University of Illinois Chicago, Chicago IL 60607, USA 
\email{\{sli230,amehri2,mzefran,paprotny\}@uic.edu}}
\maketitle              % typeset the header of the contribution
\begin{abstract}
This paper investigates how a novel paradigm called group-control can be effectively used for motion planning for microrobot swarms in a global field. We prove that Small-Time Local Controllability (STLC) in robot positions is achievable through group-control, with the minimum number of groups required for STLC being $\log_2(n + 2) + 1$ for $n$ robots. We then discuss the trade-offs between control complexity, motion planning complexity, and motion efficiency. We show how motion planning can be simplified if appropriate primitives can be achieved through more complex control actions. We identify motion planning problems that balance the number of motion primitives with planning complexity. Various instantiations of these motion planning problems are explored, with simulations used to demonstrate the effectiveness of group-control.

\keywords{Micro Robots  \and Group Control \and Global Field \and Control and Motion Planning.}
\end{abstract}
%
%
%

% \section{Table of contents}

% \begin{enumerate}
%     \item Intro/literature review
    
%     Introduce motion planning for a group of robots with a global control signal.
%     \item Background: Model, Embedding, PFSM
%     \item Group-based control

%     Introduce the result that the minimum number of groups for STLC is $\log_2 (n+2) + 1$. 
%     \item STLC

%     Start by discussing the structure of the vector fields and their Lie brackets.

%     \item Motion planning

%     Start by introducing subgroups. Note that instead of subgroups -- which are logical -- you can also introduce subgroups that are physical (by increasing the number of groups).

%     Motion planning curve
    
%     Complexity hierarchy
    
%     Motion planning cases
%     \begin{enumerate}
%         \item Extreme 1: Full motion planning, elementary control

%         \item Extreme 2: Full control, no motion planning

%         \item Interesting intermediate cases
%     \end{enumerate}
    
%     \item Conclusion and future work
% \end{enumerate}

\section{Introduction}

Microrobots have gained significant attention for their potential in medical, environmental, and industrial applications. Effective control mechanisms are crucial to enable their diverse functionalities. Various control modalities have been demonstrated for microrobots, including magnetic control~\cite{pawashe2009modeling,  qiu2015magnetic, li_development_2018}, optical control~\cite{hu2011micro, banerjee2012real, li2016swimming, palagi2019light}, and acoustic control~\cite{jeong2020acoustic, aghakhani2022high}. These systems are controlled by global fields, and as a result, they are massively underactuated.

Currently, parallel control of multiple microrobots in a global field is based on the design of robots with distinct physical characteristics, resulting in different responses of each robot to the same control signal. The Global Control Selective Response (GCSR) paradigm~\cite{donald_planar_2008,donald2013planning} uses fabrication to make each robot respond appropriately to the control signal. Turning-rate Selective Control (TSC)~\cite{paprotny_turning-rate_2013} is a variation of GCSR and differentiates robots by explicitly designing them with different turning rates. These methods scale poorly to larger swarms due to fabrication complexities.  Ensemble Control (EC)~\cite{becker2014controlling} leverages differences in the linear velocity and turning rate parameters among robots stemming from randomness in fabrication to control the position of individual robots. Unfortunately, the ability to control individual robots is inherently limited by noise, so this approach also scales poorly.

This paper focuses on electrostatic stress-engineered MEMS microrobots (MicroStressBot), originally described in~\cite{donald_untethered_2006}. The swarm is powered by a uniform electrostatic field generated by a substrate on which the robots move, which means that all robots are controlled by a single global signal. Their size limits onboard control logic and power storage, making it essential to simultaneously coordinate large groups of microrobots for applications like micro-assembly or drug delivery. Our work explores the concept of onboard multi-stage Physical Finite-State Machines (PFSMs) introduced in~\cite{paprotny2017finite}. PFSMs allow robots to be individually addressed and activated one by one. Our previous work~\cite{li2022group} takes the idea of PFSMs further by introducing \emph{group-based control}, where several robots can be activated together as a group. This paper investigates the trade-off between the motion planning complexity, control complexity, and motion efficiency within the {group-control} framework. In particular, we introduce several control and motion planning problems and discuss their scalability.
 
Motion planning for microrobots involves determining a sequence of movements to reach a given configuration while avoiding obstacles and collisions. Various methods exist, including graph-based methods like Dijkstra's and $A^*$ search~\cite{lavalle2006planning}, sampling-based algorithms like Rapidly-exploring Random Trees (RRTs)~\cite{lavalle_randomized_2001} and optimization-based methods~\cite{karaman2011sampling,zucker2013chomp}. Recent advances in machine learning, such as~\cite{qureshi2020motion, strudel2021learning, wang2020neural}, leverage neural networks to speed up RRTs and control techniques for mobile robot navigation. However, despite a wealth of solutions for motion planning problems, motion planning for microrobot swarms in a global field remains a challenging problem because the system is high-dimensional yet massively underactuated, having a single control signal.
%We thus explore the motion planning procedure for \emph{Group-Control} with the emphasis on the geometry of the problem.

Fundamentally, when a global control signal controls a microrobot swarm, the robots can be controlled individually only when each robot responds to the control signal differently. In this paper, we introduce the \emph{group-control} framework to differentiate the motion of each microrobot in the swarm. We prove that Small-Time Local Controllability (STLC) is achievable in robot positions given an appropriate group allocation, with the minimum number of groups required for STLC being $log_2(n+2)+1$ for $n$ robots. Additionally, we discuss the complexity trade-offs between control, motion planning, and motion efficiency. We identify motion planning problems that balance the number of robot groups and motion primitives with planning complexity. Various instantiations of these motion planning problems are explored, and simulations are provided that demonstrate the scalability and effectiveness of the proposed algorithms.

% \cite{wei2017stimuli, fath2022recent, yao_directed_2020}.
% highlighting their effectiveness in high-dimensional and complex environments in~\cite{karaman2011sampling}.  

%In medical applications, the safety and reliability of microrobots are paramount. Effective control mechanisms ensure microrobots operate within safe parameters, reducing the risk of patient harm. 

%Microrobots hold promise for a variety of applications, including minimally invasive surgery, targeted drug delivery, environmental monitoring, and microassembly. However, challenges such as power supply, communication, and precise control in complex environments remain. 
\section{Background}

\subsection{MicroStressBot}

The MicroStressBot is a 120 $\mu$m $\times$ 60 $\mu$m $\times$ 10 $\mu$m mobile microrobot platform introduced in~\cite{donald_untethered_2006}. A MicroStressBot has two actuated internal degrees of freedom (DOF): an untethered scratch-drive actuator (USDA) that provides forward motion and a steering-arm actuator that determines whether the robot moves in a straight line or rotates (Fig.~\ref{fig:stressbot}). A single MicroStressBot can have its arm raised or lowered, depending on the voltage applied across a substrate formed by an electrode array.  When a sufficiently high voltage is applied to the substrate, the arm is pulled into contact with the substrate, and the robot rotates around the contact point. In contrast, when the voltage is reduced below a threshold, the arm is raised, and the robot moves forward.

MicroStressBot control has been successfully implemented in~\cite{donald2013planning}. It has been shown that if the pull-down and release voltages of the robots are different, they can be independently controlled. However, it is difficult to consistently fabricate robots to respond in the desired way. As a result, the approach scales poorly.

\begin{wrapfigure}{r}{0.5 \textwidth}
\vspace{-3mm}
    \centering
    \includegraphics[width=0.5 \textwidth]{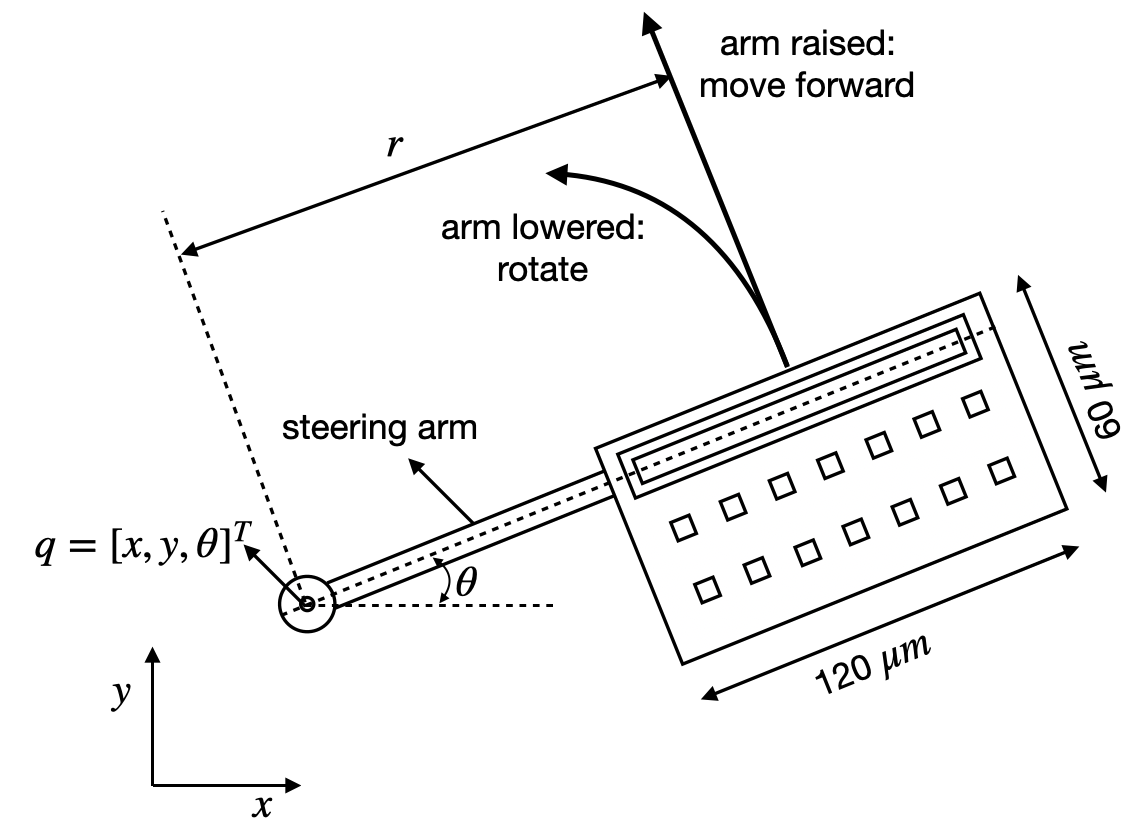}
    \caption{The schematic of a MicroStressBot.}
    \label{fig:stressbot}
    \vspace{-10mm}
\end{wrapfigure}

\subsection{Physical FSM Robots}

One solution to dramatically improve the scalability of the microrobot swarm control is to make the robots respond to a temporal sequence of (a small number of) voltage levels rather than to the voltage directly. Finite-State Machines (FSMs) can accept a set of input sequences~\cite{sipser1996introduction} (sequences of control signal levels). Previously, in~\cite{paprotny2017finite}, we proposed the on-board MEMS Physical FSM (PFSM) that, upon acceptance of a unique control signal sequence, causes the arm of the MicroStressBot to be pulled up or down. PFSMs can be constructed from several basic modules that are combined together and thus fabricated efficiently. In this work, we build on this idea to propose \emph{group-control}. In particular, we use the fact that several PFSM modules, each corresponding to one group, can be combined together so that each robot can belong to several groups.

\subsection{Group-Control}\label{group_allo}

%A common way to control multiple robots independently with a single global input is to design robots to be physically distinct so that robots can behave differently according to the same input. However, it is difficult and time-consuming to do so when the number of robots is large.

The core idea of {group-control} is that by equipping the robots with PFSMs, they can be assigned to (several) different groups. When the activation sequence corresponding to a group is sent to the swarm, all the robots that belong to the group are activated. In this paper, activating a MicroStressBot corresponds to raising its arm so that it can move forward (translate); otherwise, the robot rotates. By assigning each robot to a unique subset of groups, we can differentiate between the robots and make them respond differently to a sequence of inputs that activate all the groups one after the other, each time moving the robots belonging to the group for some distance.

Table~\ref{Tab:Groups} shows how six robots can be assigned to different groups. To make the selection of groups unique to each robot, if we have $n$ robots, we need $m=\log_2 (n+2)+1$ groups ($n+2$ rather than $n$ because each robot needs to belong to at least one group so it can move forward, and no robot can belong to all the groups so it can rotate). We add one more group where none of the robots translate, they all rotate in place. This special group will be the group $G_m$.

As can be easily seen from Table~\ref{Tab:Groups}, group assignment corresponds to assigning a unique bit pattern (a subset of groups) to each robot, where each bit corresponds to a group (labeled $G_i$, $i=1, 2, 3$ in Table~\ref{Tab:Groups}). The robot then belongs to the groups where the corresponding bit equals $1$. For example, robot $R_1$ belongs to the group $G_3$, robot $R_3$ belongs to the groups $G_2$ and $G_3$, etc. The group $G_4$ is the additional group with all zeros where all the robots are rotating. It is clear that this method guarantees that each robot has a distinct group allocation. Also, note that when $n=2^{m-1}-2$ (the maximum number of robots that can be controlled by $m$ groups), each group (apart from the special group $G_m$) contains exactly half, that is, $\frac{n}{2}=2^{m-2}-1$, robots.

\begin{wraptable}{l}{0.4 \textwidth}
\vspace{-9mm}
\begin{center}
 \begin{tabular}{||c | c  c  c c c c||} 
 \hline
 & \multicolumn{6}{c ||}{Robot} \\
 Group & $R_1$ & $R_2$ & $R_3$ & $R_4$ & $R_5$ & $R_6$\\ [0.5ex] 
 \hline
 $G_1$ & 0 & 0 & 0 & 1 & 1 & 1 \\ 
 \hline
 $G_2$ & 0 & 1 & 1 & 0 & 0 & 1 \\
 \hline
 $G_3$ & 1 & 0 & 1 & 0 & 1 & 0\\ \hline
 \hline
$G_4$ & 0 & 0 & 0 & 0 & 0 & 0\\
 \hline
\end{tabular}
\end{center}
\vspace{-3mm}
\caption{Allocating 6 robots to 3 groups.}
\label{Tab:Groups}
\vspace{-5mm}
\end{wraptable}

The group allocation can be realized by the on-board PFSMs. At each time step $k$, only one group of robots is activated, and the robots belonging to that group translate (move forward) while the remaining robots rotate. We call a sequence of group selections an \emph{activation sequence}.
% Formally, it is defined as:
% \begin{definition}
%     (Activation signal or switching signal) For a set of groups $\mathcal{G}$ with $n$ robots, where $\mathcal{G}= \{ G_1, G_2,\ldots, G_m\}$. There is a bijection  $\mathcal{G} \to \mathcal{A}$,  where $\mathcal{A} = \{a^1,\ldots,a^m\}$ and $a^i$ is a binary vector, i.e. $a^i = [a^i_1,\ldots,a^i_n] \in \{0,1\}^n$. We call $a^i$ as the activation signal.
% \end{definition}
% \begin{definition}
%     (Activation sequence) $Act$ is a $k$ length sequence $a(1),\ldots,a(i),\ldots,a(k)$, where $a(i)$ is an activation signal.
% \end{definition}
If $m$ is the number of groups and $n$ is the number of robots, we would need $k = O(\log_2 m)=O(\log_2(\log_2(n+2)+1))$ bits to select a group through a PFSM~\cite{paprotny2017finite}. Thus, PFSMs for group-based control can be significantly simpler than when each robot needs to be selected individually.

\section{Modeling}

\subsection{Dynamics}

To describe {group-control} mathematically, we start with a dynamic model of a MicroStressBot. The robot can move freely on a horizontal plane, so its configuration space is the Euclidean group $SE(2)$. We will describe the state of the robot $i$ with a vector $[x_i, y_i, \theta_i]^T$, where $x_i$ and $y_i$ are the Cartesian coordinates of the pivot point of the robot and $\theta_i$ is the robot orientation. The equations of motion for the robot $i$ are given by:
\begin{equation}
    \frac{d}{dt} 
    \begin{bmatrix}
    x_i\\
    y_i\\
    \theta_i
    \end{bmatrix}
  = a_i \cdot 
  \begin{bmatrix}
    \cos\theta_i\\
    \sin\theta_i\\
    0
    \end{bmatrix}\cdot u
    +(1-a_i)\cdot 
    \begin{bmatrix}
    0\\
    0\\
    1
    \end{bmatrix}\cdot \frac{u}{r_i}
    \label{eq:continuous}
\end{equation}
where $a_i \in \{0,1\}$ is the switching input that determines whether the robot $i$ is moving forward or turning in place, $u \in \mathbb{R}^+$ is the rate of forward motion, and $r_i$ is the turning radius of the robot $i$. If the control input is piecewise constant over each epoch $\Delta T$, we obtain the following discrete-time model:
\begin{equation}
    % q_i(k+1) =
    \begin{bmatrix}
    x_i(k+1)\\
    y_i(k+1)\\
    \theta_i(k+1)
    \end{bmatrix}
    =
    \begin{bmatrix}
    x_i(k)\\
    y_i(k)\\
    \theta_i(k)
    \end{bmatrix}+
    \begin{bmatrix}
    a_{i}(k)  \cos\theta_i(k)\\
    a_{i}(k)  \sin\theta_i(k)\\
    (1-a_{i}(k))  1/r_i
    \end{bmatrix} u(k) \cdot \Delta T.
    \label{eq:discrete}
\end{equation}
Note that $u$ is a unilateral control input. In other words, MicroStressBot cannot go backward or turn clockwise.
    
\subsection{Switched and Embedded Systems}

{Group-control} corresponds to setting the switching input $a_i$ of the robot $i$ to $1$ if it belongs to the activated group or to $0$ otherwise. {Group-control} with $m$ groups thus turns the swarm into a switched system with $m$ discrete states. Such \emph{$m$-switched system} has a system state $q(t) \in \mathbb{R}^N$ that evolves according to the following dynamics:
\begin{equation}\label{eq:pre_switched}
    \dot{q}(t) = f_{v(t)}(t, q(t), u(t)), \quad q(t_0) = q_0,
\end{equation}
where at each $t>t_0$, the switching control $v(t) \in  \{ 1,2,\ldots, m \}$, and the control input $u(t) \in \mathbb{R}^M$ for some $M$ (in our case $M=1$). In the case of $n$ robots that move in $SE(2)$, we have $N=3 n$. The vector fields $ f_i: \mathbb{R} \times \mathbb{R}^N \times \mathbb{R}^M \rightarrow \mathbb{R}^N, \quad i \in \{ 1,2,\ldots, m \}$ are $\mathcal{C}^1$ and we assume that $u(t)$ is measurable. Note that the evolution of the state $q(t)$ governed by Eq.~\eqref{eq:pre_switched} does not experience any discontinuous jumps.

We introduce new variables $v_i(t) \in [0, 1]$ that satisfy $\sum_{i=1}^{m} v_i(t) = 1$. This transforms the switched system (\ref{eq:pre_switched}) to the embedded form~\cite{bengea2005optimal,wei2007applications}:
\begin{equation}\label{eq:pre_embedded}
    \dot{q}(t) = \sum_{i=1}^{m} v_i(t)f_{i}(t, q(t), u_i(t)), \quad q(t_0) = q_0,
\end{equation}
where $u_i$ is the control input for each vector field $f_i$. It can be shown~\cite{bengea2005optimal} that the set of trajectories of the switched system \eqref{eq:pre_switched} is a dense subset of the set of trajectories of the embedded system~\eqref{eq:pre_embedded}. Consequently, any trajectory of the embedded system can be approximated by the switched system to any desired accuracy, which means that the controllability of the switched system is equivalent to the controllability of the embedded system.

\section{Controllability Analysis}

For the robots to navigate among obstacles, the system, in general, needs to be Small-Time Locally Controllable (STLC). Otherwise, there are configurations in the obstacle-free configuration space $C_{free}$ that will always lead to a collision. First, we formally define the notion of STLC that will be of interest in this paper. Let $q(t)$ be the combined state of the robot swarm, where all the robot positions are first stacked together in a $2 n$ vector, followed by a $n$ vector of the robot orientations. Let $p(t) = q(t)[1:2n]$ be the position states of the robots, and let $\Theta(t)=q(t)[2n+1:3n]$ be the orientations. Let $\mathcal{R}_{(p_0,\Theta_0)}(T)$ be the set of all positions reachable by the trajectories of the system starting at $q(0)=(p_0,\Theta_0)$ in time no greater than $T$.

\begin{definition}
The system is small-time locally controllable (STLC) in position states about an initial state $(p_0,\Theta_0)$, if $p_0$ is contained in the interior of the reachable sets $\mathcal{R}_{(p_0,\Theta_0)}(T)$ for all $T>0$. \cite{kawski2002combinatorics}
\end{definition}

Note that each MicroStressBot in the swarm has $SE(2)$ symmetry (if the initial configuration of the robot is moved, its trajectories will move by the same amount). So, STLC at one initial configuration implies STLC everywhere.

In Eq.~\eqref{eq:continuous}, we assumed that MicroStressBot $i$ has the turning radius ${r_i}$. If each robot has a different turning radius, only two groups are needed to achieve STLC through group-based control: group $G_1$ that translates all robots and group $G_2$ that rotates all robots. To see this, consider the three-step primitive $P_1 = (G_1(d), G_2(\pi^{r_1}), G_1(d))$ where $G_1(d)$ moves all robots forward for $d$, $G_2(\pi^{r_1})$ rotates all robots so that robot 1 rotates exactly for $\pi$ radians, and then $G_1(d)$ again moves all the robots forward for $d$, making robot 1 return to its starting position. Then the sequence $P_{2} = (P_1, G_2((\pi - \Delta\theta_2)^{r_2}), P_1)$ moves robots 1 and 2 back to their initial position, where $\Delta\theta_2$ is the angular displacement of robot 2 due to $P_1$. Iteratively, $P_{n-1}$ moves only one robot while all others return to the starting position. By first rotating the robot so that the resulting movement due to $P_{n-1}$ is in the desired direction, we can individually move any robot in any direction, showing that the system is STLC. Note that a similar argument has been used in~\cite{paprotny_turning-rate_2013} to introduce TSC.

Assuming that all the robots have different turning rates again shifts the burden of control to fabrication and generally scales poorly. In the rest of the paper, we thus \textbf{assume that all robots have the same turning rate $r$}. We will show that {group-control} still guarantees STLC.

\begin{remark}
 Given a collection of sets of robots $\{S_0, S_1,\ldots \}$ with each $S_i$ containing robots with the same turning rate, and $S_i$ and $S_j$ having different rates if $i \ne j$, STLC can be achieved by combining the approach above with the methodology that will be described below within each set.
\end{remark}

\subsection{Unilateral Control to Bilateral Control}

As stated above, in the rest of the paper, we only consider the case where all the robots have the same turning radius $r$. In~\cite{li2022group}, we showed that a swarm of MicroStressBots under {group-control} is (globally) controllable (even without using the group where all the robots rotate). However, we were unable to demonstrate that the system is STLC. In this paper, we show that by adding the group where all the robots rotate, STLC can be achieved.

One of the challenges in showing that our system is STLC is that the robots are controlled unilaterally. In other words, they can only move forward but not backward. Similarly, they can rotate counter-clockwise but not clockwise. This is the result of the control input $u$ in Eq. \eqref{eq:pre_embedded} being positive. We show that this restriction can be relaxed.

We return to the switched system in Eq. \eqref{eq:continuous}. Let \[q(t) = [x_1, y_1, x_2, y_2, \ldots, x_n,y_n, \theta_1,\theta_2 \ldots, \theta_n]\] be the state of the swarm, where $q(t) \in \mathbb{R} ^{3n}$. We can see that $q(t)[1:2 n]$ are the \emph{position states}, with $q(t)[2j-1:2j]$ representing the position of robot $j$. Also, $q(t)[2n+1:3n]$ are the \emph{orientation states}, where $x(t)[2n+j]$ is the orientation of the robot $j$. Next, let $\alpha_i=[ \alpha_{i, 1}, \ldots, \alpha_{i, n} ]^T, i=1,\ldots,m$ be the activation vector corresponding to the group $i$ being activated. In other words, $\alpha_{i, j}=1$ if robot $j$ belongs to group $i$, and $0$ otherwise. The overall swarm dynamics can then be written as follows:
\begin{equation}\label{eqn:SS}
    \dot q(t) = f_{\nu(t)}(q(t)) \cdot u(t) \quad u(t)\in \mathbb{R}^{+}
\end{equation}
where $\nu(t) \in \{1,\ldots,m\}$, and for each $i$, $f_i$ is obtained by choosing $a_j=\alpha_{i, j}$ in Eq. \eqref{eq:continuous}. This equation describes a switched driftless control-affine system~\cite{liberzon2003switching, bullo_geometric_04}.

We introduce the notion of induced vector fields $g_i$ and $h_i$ such that $f_i = g_i + h_i$. Here, $g_i$ contains only position state information ($f_i[2n+1:3n]=0$), and $h_i$ contains only the orientation state information ($h_i[1:2n]=0$). Given that $G_m$ is the group where all the robots rotate and their rotation radius is the same, let $f_m(\pi)$ represent the rotation of all the robots by $\pi$ radians using the rotation-only vector field. 

\begin{proposition}
The control sequence $(f_i(d), f_m(\pi),f_i(d), f_m(\pi))$, where $f_i(d)$ corresponds to the activation of group $G_i$ so the robots in the group translate for $d$, corresponds to a vector field $h_i(\frac{2 d}{r})$ that rotates the robots that do not belong to the group $i$ by $\frac{2 d}{r}$ and keeps the rest of the robots where they were.
\end{proposition}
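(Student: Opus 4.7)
The plan is to verify the proposition by a direct composition of the four primitives in the sequence, tracked separately for a robot $j \in G_i$ and a robot $j \notin G_i$. Since each primitive is a constant-switching-mode segment of the driftless system in Eq.~\eqref{eq:continuous}, its effect on each robot can be read off in closed form from that equation; then the four effects compose by ordinary concatenation.

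First I would make explicit the per-primitive effect on a single robot. Running $f_i(d)$ means activating group $G_i$ for duration $\Delta t = d/u$. From Eq.~\eqref{eq:continuous}, if $j\in G_i$ then $a_j=1$ and the robot translates by $d(\cos\theta_j,\sin\theta_j)$ with orientation unchanged; if $j\notin G_i$ then $a_j=0$ and the robot sits still while its orientation advances by $(u/r)\Delta t = d/r$. The primitive $f_m(\pi)$ activates the all-rotate group, so every robot leaves its position fixed and advances its orientation by $\pi$.

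Next I would compose the sequence for a robot $j\in G_i$ starting from $(p,\theta)$. The first $f_i(d)$ gives $(p+d(\cos\theta,\sin\theta),\theta)$; the following $f_m(\pi)$ yields $(p+d(\cos\theta,\sin\theta),\theta+\pi)$; the second $f_i(d)$ translates the robot by $d(\cos(\theta+\pi),\sin(\theta+\pi)) = -d(\cos\theta,\sin\theta)$, returning the position to $p$ with orientation $\theta+\pi$; the final $f_m(\pi)$ produces $(p,\theta+2\pi)\equiv(p,\theta)$. Hence every robot in $G_i$ is pointwise fixed, so the sequence acts on them as the identity, matching the claim that $h_i$ has zero position component and only acts on the non-group robots' orientations.

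Finally I would compose for a robot $j\notin G_i$ starting from $(p,\theta)$. Neither $f_i(d)$ step moves the position, and every step contributes to the orientation: the two $f_i(d)$ steps each add $d/r$ and the two $f_m(\pi)$ steps each add $\pi$, for a net orientation change of $2d/r + 2\pi \equiv 2d/r \pmod{2\pi}$. Position is preserved and orientation advances by $2d/r$, exactly the effect of $h_i(2d/r)$. I do not expect any real obstacle; the only point that needs care is confirming that the angular rate during an $f_i(d)$ step for a non-group robot is $u/r$ and the step duration is $d/u$, so that the produced angular displacement is independent of $u$ and equals $d/r$. The rest is bookkeeping of two cases, and the $SE(2)$ symmetry noted earlier makes the computation independent of the initial pose.
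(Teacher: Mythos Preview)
Your proposal is correct and follows essentially the same argument as the paper: a direct case-by-case composition of the four segments, separately for robots in $G_i$ (forward, flip, back, flip) and robots not in $G_i$ (accumulate $d/r+\pi+d/r+\pi\equiv 2d/r$ in orientation with position fixed). Your version is slightly more explicit about the duration $\Delta t=d/u$ and why the angular displacement is independent of $u$, but the structure matches the paper's proof exactly.
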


\begin{proof}
  Observe that $f_i(d)$ translates the robots that are in $G_i$ for $d$ without changing their orientation and rotates the robots that are not in $G_i$ for $\frac{d}{r}$. The application of $f_m(\pi)$ rotates all the robots for $\pi$, which means that the robots that are in $G_i$ now point in the opposite direction. The second application of $f_i(d)$ then returns the robots that are in $G_i$ to their initial position while adding another $\frac{d}{r}$ to the orientation of the robots that are not in $G_i$. Finally, $f_m(\pi)$ rotate the robots in $G_i$ to their original orientation. The robots not in $G_i$ have instead rotated in total for $\frac{2 d}{r}+2 \pi=\frac{2 d}{r}$. \qed
\end{proof}

\begin{remark}
    Observe that for any angle $\theta \in [0, 2 \pi]$, $h_i(-\theta)= h_i(2 \pi - \theta)$. The orientation vector field $h_i$ is thus bilateral.
\end{remark}

Once we have the orientation vector field $h_i$, we can easily obtain the translation vector field $g_i$:
\begin{proposition}
The control sequence $(f_i(d),h_i(-\frac{d}{r}))$ corresponds to a vector field $g_i(d)$ that translates the robots in $G_i$ for $d$ and leaves all the other robots where they were.
\end{proposition}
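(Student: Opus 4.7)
The plan is a direct composition argument that mirrors the proof of the previous proposition: track the effect of each primitive on the in-group and out-of-group robots separately, and show that the net rotation of the out-of-group robots is zero while the net translation of the in-group robots is $d$.

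First I would apply $f_i(d)$ and record its two effects: robots in $G_i$ translate forward by $d$ with their orientation unchanged, while robots outside $G_i$ stay in place but rotate by $d/r$ (this is exactly the decomposition $f_i = g_i + h_i$ that was just introduced). Next I would apply $h_i(-d/r)$; by the previous proposition $h_i$ is a pure orientation field acting only on robots outside $G_i$, so it leaves positions untouched and rotates the out-of-group robots by $-d/r$. Composing the two primitives, the out-of-group robots have accumulated a rotation of $d/r + (-d/r) = 0$ and never moved, while the in-group robots have translated by $d$ and have had their orientation fixed throughout. This is precisely the vector field $g_i(d)$ as claimed.

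The one subtlety I would flag is the use of the negative argument $-d/r$: since the underlying control is unilateral, $h_i(-d/r)$ is not literally a backward rotation but must be interpreted through the bilaterality remark, i.e.\ $h_i(-d/r) = h_i(2\pi - d/r)$, which is realized by the four-step primitive of the previous proposition with $d'$ chosen so that $2d'/r = 2\pi - d/r$. I would state this identification explicitly so the reader sees that the construction is implementable by an actual activation sequence, not merely a formal sign flip.

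I do not expect any real obstacle here — the argument is a short bookkeeping exercise — but the main thing to be careful about is making the distinction between the abstract vector fields $g_i$ and $h_i$ (which may take signed arguments) and the physical activation sequences that realize them (which cannot), so that the claim $(f_i(d), h_i(-d/r)) \equiv g_i(d)$ is unambiguous.
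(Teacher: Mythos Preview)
Your proposal is correct and is precisely the intended argument; in fact the paper states this proposition without proof, treating it as immediate from the decomposition $f_i = g_i + h_i$ and the previous proposition. Your explicit handling of the sign subtlety via $h_i(-d/r) = h_i(2\pi - d/r)$ is exactly the content of the bilaterality remark that follows the statement in the paper.
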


\begin{remark}
    We can also easily see that $(f_m(\pi),g_i(d),h_i(\pi))=g_i(-d)$. Thus, the translation vector field $g_i$ is bilateral.
\end{remark}

Note that while $h_i$ rotates the robots that are not in $G_i$ in place, the other robots travel back and forth and could potentially collide with an obstacle. However, we can restrict the motion of the robots in $G_i$ to a small ball of radius $\epsilon$ and apply $h_i(\frac{\epsilon}{r})$ multiple times to obtain the desired rotation of the robots not in $G_i$ without the robots in $G_i$ leaving a small neighborhood of their initial location.

Technically, the rotations we used in the constructions above cannot be performed in zero time. However, for the purpose of position control, they can be assumed to take place instantaneously. Therefore, in the rest of the paper, we can safely assume that the control input can be bilateral. 

With the construction above, the original embedded system in Eq. \eqref{eq:pre_embedded} with unilateral vector fields $\{f_i,\ldots,f_m\}$ becomes an embedded system with \textbf{bilateral} vector fields
\begin{equation}
\mathcal{F} = \{g_1, \ldots, g_{m-1}, h_1, \ldots, h_{m-1}, f_m\}.
\label{Eq:VecFields}
\end{equation}

\subsection{Control of Robot Orientation}\label{Sect:Rotation}

We have shown that each group $G_i$ induces a vector field $g_i$ that only translates the robots in the group and a vector field $h_i$ that just rotates the robots that are not in $G_i$. What we show next is that we can independently control the orientation of at least 3 robots:

\begin{proposition}\label{prop:angle_dof}
    The orientation of any 3 robots can be controlled to any desired orientation $(\theta_1,\theta_2,\theta_3)$. Moreover, at most $m$ robots can be controlled to any desired orientation $(\theta_1,\ldots,\theta_m)$.
\end{proposition}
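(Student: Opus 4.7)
The plan is to reduce the claim to a linear-algebra statement about the orientation dynamics established in Section~\ref{Sect:Rotation}. Each of the bilateral fields $h_1,\dots,h_{m-1},f_m$ acts on the orientation torus by adding a fixed scalar to every robot's angle, so the flows commute; applying signed amounts $\alpha_1,\dots,\alpha_m$ therefore realises the linear map $\vec\alpha\mapsto A\vec\alpha$ from $\mathbb{R}^m$ to $\mathbb{R}^n$ whose $j$-th row is
$A_j=(1-b_j[1],\dots,1-b_j[m-1],\,1)$,
where $b_j\in\{0,1\}^{m-1}$ encodes robot $j$'s membership in $G_1,\dots,G_{m-1}$. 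Prescribing the orientations of a subset $S$ of robots to any target is then equivalent to the submatrix $A_S$ having rank $|S|$.

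The upper bound is immediate from this reduction: $A$ has only $m$ columns, so $\operatorname{rank}(A_S)\le m$ for every $S$, which is exactly the "at most $m$" claim. For the "any $3$ robots" half I would fix $j_1,j_2,j_3$ and show the three rows $(c_k,1)$ with $c_k:=\mathbf{1}-b_{j_k}\in\{0,1\}^{m-1}$ pairwise distinct are linearly independent in $\mathbb{R}^m$. Since the three augmented points $(c_k,1)$ all lie in the affine hyperplane $\{x_m=1\}$, linear independence is equivalent to the $c_k$ being affinely independent (non-collinear) in $\{0,1\}^{m-1}$.

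For the collinearity check, suppose $c_2=c_1+t(c_3-c_1)$ for some $t\in\mathbb{R}$. In coordinates with $c_1[k]=c_3[k]$ this forces $c_2[k]=c_1[k]$; in coordinates with $c_1[k]\ne c_3[k]$ we have $c_3[k]-c_1[k]\in\{\pm 1\}$, so $c_2[k]-c_1[k]=\pm t$ must lie in $\{-1,0,1\}$, forcing $t\in\{0,1\}$ and hence $c_2=c_1$ or $c_2=c_3$, contradicting distinctness. Rank $|S|=3$ then lets us solve the (underdetermined) system $A_S\vec\alpha=\vec\theta$ for any target.

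The main obstacle is precisely this collinearity step: without the constant $1$ in the last column contributed by the uniform-rotation group $G_m$, three distinct binary $b_j$'s can be linearly dependent (e.g.\ $(1,0),(0,1),(1,1)$), so the argument must both exhibit why $G_m$ is essential and exploit the binary nature of the other coordinates to rule out every $t\notin\{0,1\}$.
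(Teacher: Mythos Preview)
Your approach is essentially the same as the paper's: both reduce the claim to a rank statement about the $n\times m$ matrix whose $j$-th row is $(\mathbf{1}-b_j,\,1)$, obtain the upper bound $m$ from the column count, and then argue that any three rows are linearly independent. Where the paper simply asserts that, after appending the all-ones column, ``it is clear that any additional row will be linearly independent'' from two already-independent rows, you supply the explicit affine-independence (non-collinearity) argument, which is a genuine improvement in rigor over the paper's hand-wave.

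One small slip in your collinearity step: from ``$c_2[k]-c_1[k]=\pm t$ must lie in $\{-1,0,1\}$'' you can only conclude $t\in\{-1,0,1\}$, not $t\in\{0,1\}$. The missing observation is that the two cases are sharper than you wrote: if $c_3[k]-c_1[k]=+1$ then necessarily $c_1[k]=0$, so $c_2[k]-c_1[k]=c_2[k]\in\{0,1\}$ and hence $t\in\{0,1\}$; symmetrically, if $c_3[k]-c_1[k]=-1$ then $c_1[k]=1$, so $c_2[k]-c_1[k]\in\{-1,0\}$ and again $t\in\{0,1\}$. With that one-line sharpening your argument is complete and in fact cleaner than the paper's.
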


\begin{remark}
    The robots that are not directly controlled will rotate in place to some unspecified orientation.
\end{remark}

\begin{proof}
Consider the rotation vector fields $h_1, \ldots, h_{m-1}$ in \eqref{Eq:VecFields} and the additional original rotation-only vector field $f_m$. All of these vector fields can rotate the robots without translating them. The proposition can then be proved by observing that, using these vector fields, the orientations of the robots linearly depend on the arguments (rotation angles) of these vector fields. For example, for the 4-group case in Table~\ref{Tab:Groups}, we get the equation:
\begin{equation}
    \begin{bmatrix}
% 1 & 1 & 1 & 0 & 0 & 0\\
% 1 & 0 & 0 & 1 & 1 & 0\\
% 0 & 1 & 0 & 1 & 0 & 1\\
% 1 & 1 & 1 & 1 & 1 & 1

1 & 1 & 0 & 1 \\
1 & 0 & 1 & 1 \\
1 & 0 & 0 & 1 \\
0 & 0 & 1 & 1 \\
0 & 1 & 0 & 1 \\
0 & 1 & 1 & 1 
\end{bmatrix} \cdot \begin{bmatrix}
    u_1 &u_2& u_3&u_4
\end{bmatrix}^T
 = 
 q[2n+1:3n]-q[2n+1:3n](0)
\label{eqn:rotation}
\end{equation}
where $q[2n+1:3n]$ are the orientations of the robot after the application of the vector fields, and $q[2n+1:3n](0)$ are their initial orientations.

It is evident that for $m$ groups, the rank of the matrix in the equation above is at most $m$. This implies that at most $m$ of the $n$ equations can be solved exactly for the desired inputs $u_i$, indicating that the orientation can be controlled for at most $m$ robots.

Next, we illustrate that the orientation of any 3 robots can be controlled. To see this, note that if one considers the first $m-1$ columns of the matrix, any two rows are distinct (since no two robots belong to the same set of groups), which means that at least two of them are linearly independent. Adding the last column with all ones, it is clear that any additional row will be linearly independent from these two rows, which means that the matrix will have a rank of at least 3 and that any 3 of the $n$ equations can be solved exactly for the desired inputs $u_i$. \qed
\end{proof}

\subsection{Small-Time Local Controllability (STLC)}\label{sec: control_proof}

After we have identified the control vector fields in Eq.~\eqref{Eq:VecFields}, we can show that the swarm is STLC in positions. Traditionally, we can do this by analyzing the closure of $\mathcal{F}$ under the Lie bracket operation. However, we will show that we can independently translate any robot in an arbitrary direction without moving any other robot. This shows that the system is STLC in positions.

Suppose we want to independently translate some robot $k \in \{1, \ldots,n\}$. Let $G_{\hat{k}}$ be any group that contains the robot $k$. Let $\operatorname{Rot}_{i_1,i_2,i_{3}}(\theta_1,\theta_2,\theta_{3})$ correspond to setting the orientation of robots $i_1,i_2,i_{3}$ to $\theta_1,\theta_2,\theta_{3}$ as described above. Also, assume that the members of the group $G_{\hat{k}}$ are robots $l_1,\ldots,l_j,k$.

We can then state the following:
\begin{proposition}\label{Thm:STLC}
    The control sequence $P_1(d)=(g_{\hat{k}}(d),\operatorname{Rot}_{l_1,l_{2},k}(\pi,\pi,0),g_{\hat{k}}(d))$ only translates robots $l_{3},\ldots,l_j,k$.
\end{proposition}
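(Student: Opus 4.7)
The plan is to track the displacement of each robot through the three segments of $P_1(d)$, partitioning the swarm into four disjoint classes: robots outside $G_{\hat{k}}$, the two ``canceling'' members $l_1$ and $l_2$, the target robot $k$, and the remaining members $l_3,\ldots,l_j$ of $G_{\hat{k}}$. The two ingredients I would rely on are (i) the translation vector field $g_{\hat{k}}(d)$, constructed earlier in this section, moves every robot of $G_{\hat{k}}$ by $d$ along its current heading and leaves all other positions untouched, and (ii) the operator $\operatorname{Rot}_{l_1,l_2,k}$ is realized entirely from the pure-rotation fields $h_1,\ldots,h_{m-1}$ and $f_m$, so it produces no translation of any robot.

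Given these two facts, the bookkeeping is routine. A robot outside $G_{\hat{k}}$ is never touched by either $g_{\hat{k}}$-segment and hence has zero net translation (its orientation may drift, but this is irrelevant to the claim). For $l_1$ and $l_2$, the first $g_{\hat{k}}(d)$ displaces them by $d$ along their initial headings; the middle step flips those headings by exactly $\pi$; the second $g_{\hat{k}}(d)$ then retraces the same segment in reverse, yielding net zero displacement. Robot $k$, whose orientation is held fixed by the zero argument in $\operatorname{Rot}_{l_1,l_2,k}(\pi,\pi,0)$, picks up two parallel translations of length $d$ and therefore moves by $2d$ along its initial heading. The remaining members $l_3,\ldots,l_j$ undergo uncontrolled rotations during the middle step, so their two translations of length $d$ occur along generically distinct headings and constitute the only possibly-nonzero motion besides that of $k$.

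The one point that warrants care, and is the main obstacle I anticipate, is justifying that an operator such as $\operatorname{Rot}_{l_1,l_2,k}(\pi,\pi,0)$ is legitimately available and truly position-preserving. Availability is precisely Proposition~\ref{prop:angle_dof}, which guarantees that the orientations of any three robots can be driven to prescribed values by a combination of the $h_i$ and $f_m$ fields; position preservation is built into those very vector fields by construction, since each $h_i$ and $f_m$ acts only on the orientation components of $q(t)$. With this in hand, assembling the four cases above shows that $P_1(d)$ translates at most the robots $l_3,\ldots,l_j,k$, which is the claim.
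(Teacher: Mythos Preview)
Your proposal is correct and follows essentially the same approach as the paper's proof: track each class of robots through the three segments, use that $g_{\hat{k}}$ is a pure translation of the group members and that $\operatorname{Rot}$ is position-preserving, and conclude that $l_1,l_2$ cancel, $k$ accumulates $2d$, and $l_3,\ldots,l_j$ pick up an unspecified translation. Your write-up is in fact more explicit than the paper's, which compresses the argument to a couple of sentences before moving on to the recursive construction of $P_2,P_3,\ldots$ that establishes STLC.
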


\begin{proof}
Under the control sequence above, the first $2$ robots in $G_{\hat{}}$ will travel back and forth while the robot $k$ will undergo translation for $2  d$. The rest of the robots in the group will undergo some unspecified motion.

Once we have the control sequence $P_1$, it is easy to see that a control sequence $P_2 = (P_1(d),\operatorname{Rot}_{l_{3},l_{4}, k}(\pi-\Delta\theta_{l_{3}},\pi-\Delta\theta_{l_{4}}, 0),P_1(d))$ will only translate robots $l_{5},\ldots,l_j,k$, where $\Delta \theta_{l_{3}}, \Delta \theta_{l_{4}}$ are rotations of robots $l_3, l_4$ caused by $P_1(d)$. Repeating this process at most $l_j/2$ times allows us to eliminate the translation of all robots $l_1,\ldots,l_j$ except for the robot $k$. Of course, we can always initially rotate the robot $k$ so that the final translation is in the desired direction. This shows that each robot can be translated independently and that the system is STLC in positions.\qed
\end{proof}

\section{Motion Planning}

      % \vspace{-10mm}

% A common approach of robotics planning in an obstacle environment is the sampling-based method. For group-based microrobots using RRT, the typical method involves selecting and moving one group within the RRT local planner, and then adding nodes to the tree accordingly. While this method is viable, it has significant planning complexity. Therefore, we investigate how system constraints can inform the effective use of the sampling-based RRT method for group-based microrobots.

After we have shown that the MicroStressBot swarm under {group-control} is STLC, we consider the motion planning problem for the swarm. Given our proof of STLC, the problem is not how to move the swarm to a particular configuration, but how to move it \textbf{efficiently}. In other words, ideally, we would like the robots to move to their desired configuration (only the position matters) in parallel and not one by one. However, planning such parallel motion directly is complex due to the high dimension of the configuration space and the fact that we have a single control input. In this paper, we argue that effective motion planning for global {group-control} in high-dimensional state spaces should involve the design of \textbf{motion primitives}. In particular, motion primitives will be associated with \textbf{subgroups}, trading off the complexity of implementing additional groups in hardware (using PFSMs) with the complexity of control.

The idea of motion primitives is to divide a robot group into smaller subgroups. These primitives simplify the motion planning task by reducing the coupling between robots in the group. We present primitives inspired by Lie bracket motions, following the intuition that, in general, the higher the order of the Lie bracket, the fewer robots are moved by it. It is important to note that, in this paper, these subgroups are logical constructs as opposed to $\log (n+2)  +1$ physical groups. However, these logical groups could be implemented in hardware, increasing the number of physical groups and simplifying motion planning and control for the swarm. Of course, this would come with increased fabrication complexity and cost.

%The order of a Lie bracket denoted $r_k$, serves as a measure of complexity, where $r_k \in \{ 1,\dots, r_s\}$ and $r_s$ is the lowest Lie bracket order that ensures STLC. As the motion planning process incorporates higher-order Lie brackets, the corresponding controls become more complex. Nevertheless, increasing the number of decoupled subgroups simplifies the planning task by reducing the need for coupled motions.

In the following section, we are interested in the complexity of the motion planning problem in terms of computation time, path length, and execution time. We investigate how the motion planning problem can be simplified by using motion primitives through both a theoretical complexity bound and simulated instantiations. A pivotal aspect is that a primitive corresponding to a smaller subgroup will generally involve the recursive application of primitives associated with larger subgroups, as observed in the proof of Proposition~\ref{Thm:STLC}. Because of this, such primitives will involve many back-and-forth motions of the robots, resulting in longer path lengths and execution times. However, motion plans that involve such primitives will be simpler, which means a shorter computation time. On the other hand, when the primitives used for motion planning involve larger subgroups (not much back-and-forth), the motion of the robots is coupled, which increases the computation time for motion planning but minimizes back-and-forth motions and reduces path length and execution time.

\subsection{Motion Planning Approximation Scheme}

% We start by giving some assumptions and definitions that are fundamental to the {group-control} framework. We start with the following \textbf{assumptions}:
% \begin{enumerate}
%     \item Robots are controlled by a global signal that operates them in groups, with one group activated at a time.
%     \item A predefined map of the area is available and known.
%     \item Each robot is assigned a starting point and a goal configuration. The starting point includes the robot's position and orientation, whereas the goal configuration only involves position; the orientation can be arbitrary.
% \end{enumerate}

Unlike physical groups (encoded through PFSMs), a subgroup and the associated motion primitive are a logical construct. A subgroup contains fewer robots than physical groups, and only those robots move forward when a subgroup is activated. This abstraction will allow us to formulate the motion planning problem in layers of increasing difficulty. Physically, the motion induced by a given subgroup $p$ corresponds to a particular Lie bracket and can thus be physically realized. We will refer to such motion as a \emph{motion primitive associated with the subgroup $p$}.

%refer to When using a motion primitive associated with a subgroup $p$ we will use the notation $<p,l>$ to indicate that the robots in the subgroup $p$ moved forward for the distance $l$.

% , which consists of a control vector field or a Lie bracket direction $f$, and a length $l$. This pair is realized through a sequence of control actions represented as $<p, l> = {<a_1,u_1>, <a_2,u_2>, ...}$, where $a_i$ denotes the group switched at time $i$ and $u_i$ denotes the linear input at time $i$.

\begin{definition}
    \textbf{[Primitive Order]} The order of a primitive corresponding to the subgroup $p$ is the order of the Lie bracket used to realize it. The order of the Lie bracket is the number of generators from the set of vector fields $\mathcal{F}$ in Eq.~\eqref{Eq:VecFields} in it.
\end{definition}

 In general, the higher the order of a Lie bracket, the fewer robots will move by that bracket. However, there is no one-to-one correspondence between the order of the Lie bracket and the number of robots it moves, as this also depends on the vector fields involved in the Lie bracket. Fundamentally, the order of the Lie bracket indicates the difficulty of its implementation; higher order is more difficult to implement. On the other hand, the number of robots affected by a Lie bracket reflects the complexity of motion planning; the more robots move simultaneously, the more difficult it is to move each of them to the desired configuration.
    % See section \ref{sec: control_proof} for the definition of order of Lie bracket. 
    
% For formal definition of the order of Lie bracket see \cite{Kawski,Sussman}. 
% \mz{You need to formally explain what's the relationship btw. the order of the Lie bracket and the number of robots that move under that bracket.} \SL{done}

% \begin{proposition}

% \end{proposition}

\begin{definition}
\textbf{Motion Planning Problem Abstraction} The motion planning problem for $n$ robots of level $r$, denoted as $\mathcal{M}_{q_s}^{q_g}(n, r)$, corresponds to the problem of finding a collision-free trajectory (sequence of motion primitives) for $n$ robots controlled by a global field from some initial state $q_s$ to a goal state $q_g$, where the motion primitives that are used have the primitive order at most $r$. 
\end{definition}

% Further, we provide some theoretical insights into motion planning complexity for the microrobot System.

\begin{proposition}
     Consider the motion planning problem $\mathcal{M}_{q_s}^{q_g}(n,k)$ for $n$ robots and $1 \leq k\leq k_{max}$, where $k_{max}$ is the primitive order required for STLC. If $k_1 < k_2 \leq k_{max}$, the complexity of $\mathcal{M}_s^g(n,k_2)$ is less than $\mathcal{M}_{q_s}^{q_g}(n,k_1)$. When $k = k_{max}$, the complexity of $\mathcal{M}_{q_s}^{q_g}(n,k_{max})$ solved by RRT~\cite{lavalle_randomized_2001} is $O(n\cdot c^n)$. For any other $k$, the complexity of $\mathcal{M}_{q_s}^{q_g}(n,k)$ solved by RRT is $O(n\cdot c^n\cdot L^{k_{max}-k})$, where $L$ is the complexity of the first-order Lie bracket and $c$ is a constant depending on the environment size and resolution.
\end{proposition}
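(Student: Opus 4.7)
The plan is to establish the three claims in the order they naturally arise: the base-case RRT complexity at $k=k_{max}$, the recursive overhead $L^{k_{max}-k}$ for $k<k_{max}$, and then derive the monotonicity as a corollary of the resulting closed-form expression.

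First, I would fix the planning space as the $2n$-dimensional position configuration space (orientations are free, since the problem only specifies start and goal positions) and invoke the standard probabilistic-completeness analysis of RRT from~\cite{lavalle_randomized_2001}. Discretizing each coordinate at a resolution determined by the environment yields a cover of size exponential in the dimension, which after absorbing the base of the exponent into $c$ becomes $c^n$. The additional factor of $n$ in $O(n\cdot c^n)$ comes from two sources: per-extension collision checking among $n$ robots is linear in $n$, and each single-robot translation primitive from Proposition~\ref{Thm:STLC} is itself realized by a length-$O(n)$ sequence that iterates over the other robots in the activated group to cancel their induced motions. At $k=k_{max}$ these single-robot primitives are directly available, so the RRT extension cost is $O(n)$ and the total complexity is $O(n\cdot c^n)$.

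Second, for $k<k_{max}$ the single-robot primitive is not directly available. Following the recursive construction in the proof of Proposition~\ref{Thm:STLC} and the bilateral-control conversions, every primitive of order $r$ that moves only a small number of robots is realized by a bounded-length composition of order-$(r+1)$ primitives, with each composition contributing a constant number of first-order Lie brackets of cost $O(L)$. Unrolling the recursion from $k_{max}$ down to $k$ multiplies the per-extension cost by $L^{k_{max}-k}$, yielding $O(n\cdot c^n\cdot L^{k_{max}-k})$. The monotonicity claim follows immediately: since $L\geq 1$, the exponent $k_{max}-k$ is decreasing in $k$, so $k_1<k_2$ gives $L^{k_{max}-k_1}\geq L^{k_{max}-k_2}$, with strict inequality whenever $L>1$.

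The hardest step will be justifying the base $c^n$ bound cleanly: probabilistic-completeness arguments for RRT typically provide covering-number bounds rather than crisp worst-case complexity statements, and one must verify that the high-dimensional, underactuated structure of the swarm does not invalidate the constant $c$ that captures environment size and resolution. A secondary difficulty is showing that the recursion in Proposition~\ref{Thm:STLC} has bounded branching at every level, so that exactly one factor of $L$ is incurred per recursive descent rather than a level-dependent factor; this is what allows the per-extension overhead to collapse into the clean product $L^{k_{max}-k}$ rather than a more awkward telescoping expression.
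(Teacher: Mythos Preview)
Your overall structure matches the paper's, but you diverge on two points worth flagging.

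First, the source of the linear factor $n$. The paper obtains it purely from the RRT data-structure cost: with $n_s = O(c^n)$ samples needed to cover the $2n$-dimensional position space, the nearest-neighbor query per extension costs $O(\log n_s) = O(n\log c)$, and the total is $O(n_s\log n_s)=O(n\cdot c^n)$. You instead attribute the $n$ to per-extension work (collision checking among $n$ robots and the $O(n)$-length realization of the single-robot primitive from Proposition~\ref{Thm:STLC}). Both accountings happen to land on the same bound, but they are not the same argument; in particular, the paper treats the order-$k_{max}$ primitive as a unit-cost action of a fully actuated system and does not charge it $O(n)$ execution cost. If you keep both sources you pick up an extra polynomial factor, so choose one cost model and be explicit about it.

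Second, your recursion is stated in the wrong direction. You write that an order-$r$ primitive ``is realized by a bounded-length composition of order-$(r{+}1)$ primitives''; it is the other way around. An order-$(r{+}1)$ Lie bracket is implemented by alternating back-and-forth motions along its two lower-order factors, so the simulation builds the needed order-$k_{max}$ motion from the available order-$k$ primitives, picking up one factor of $L$ per nesting level and yielding $L^{k_{max}-k}$. Your final formula and the monotonicity corollary ($L\geq 1$, exponent decreasing in $k$) are fine, but the verbal description of the recursion should be flipped to match the standard nesting of bracket maneuvers that the paper invokes.
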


\begin{remark}
    By observing that the primitives used in the proof of Theorem~\ref{Thm:STLC} are analogous to a Lie bracket operation, it can be shown that $k_{max}<=m$. The formal proof relies on the special properties of the vector fields in $\mathcal{F}$ (e.g., $[h_i,h_j]=[g_i,g_j]=[h_i,g_i]=0$ for any $i,j<m$) and is beyond the scope of this paper.
\end{remark}

%\begin{remark}
%It can be shown that $r_s=...$.
%\end{remark}

\begin{proof}
For \( k = k_{max} \), the system is fully actuated due to STLC, allowing motion in all directions of the configuration space. For position control of $n$ robots, the dimension of the configuration space is \( d = 2n \). As the complexity of representing the space is exponential in $d$, the number of samples \( n_s \) required for RRT scales as \( O(c^n) \), resulting in the complexity \( O(n_s \log n_s) = O(c^n \log(c^n)) = O(n\cdot c^n) \) for the RRT solver.

When \( k < k_{max} \), the system is underactuated, meaning that motion in certain directions cannot be directly achieved. Instead, these motions are approximated through paths composed of Lie brackets of control vector fields. The Lie bracket is implemented by alternating the motion back and forth along two vector fields. If $L$ is the complexity of this maneuver for the first-order Lie bracket, higher-order Lie brackets correspond to nested applications of an analogous maneuver. Given a Lie bracket of order $k$, the motion in a given direction must ultimately be implemented through the Lie brackets of the order $k_{max}$. The complexity of this implementation is $L^{k_{max}-k}$.  Therefore, the overall complexity of the RRT algorithm becomes \( O(n_s  L^{k_{max}-k} \log n_s) = O(n \cdot c^n \cdot L^{k_{max}-k} ) \).
% Thus, the complexity to get $n_s = O(c^n)$ effective number of samples is \( O(c^n L^{k_{max}-k} ) \). The total complexity of RRT algorithm becomes \( O(n_s \log n_s) = O(c^n L^{k_{max}-k} \log(c^n L^{k_{max}-k})) \).

As \( k \) approaches \( k_{max} \), the complexity reduces since the system becomes more actuated, requiring fewer nested Lie bracket computations. Consequently, the complexity of \( \mathcal{M}_{q_s}^{q_g}(n,k_2) \) is less than \( \mathcal{M}_{q_s}^{q_g}(n,k_1) \) for \( k_1 < k_2 \leq k_{max} \). \qed
\end{proof}

\begin{example}
    Continuing with the example of 6 robots and 4 groups, consider the motion planning problem $\mathcal{M}(6,2)$. In this case, the minimum order to move each robot is $r_{s}=2$ (see Table~\ref{Tab:bracketeachrobot}). We can select six primitives involving these second-order Lie brackets that move each robot individually.
    %, as shown in Table~\ref{Tab:bracketeachrobot}.
    The problem $\mathcal{M}(6,2)$ then reduces to rotating each robot toward the goal and moving it using its corresponding primitive.
    
    For the motion planning problem $\mathcal{M}(6,1)$, the original vector fields from $\mathcal{F}$ are chosen as the primitives. Each primitive moves three robots simultaneously. For example, $g_1$ moves robots 4, 5, and 6, and $g_3$ moves robots 1, 3, and 5.
    %Moving each robot individually is computationally much more straightforward than finding a path that moves three robots at any given time. 
    To move each robot to its desired configuration using these primitives requires the application of the Lie bracket $[h_1+h_2-f_4, g_3]$ utilizing a sequence of primitives $g_3$ and $h_1+h_2 f_4$.
    %Thus, moving one robot requires a complexity of $\Omega(L^{2-1}) = \Omega(L)$ and the overall complexity of $\mathcal{M}(6, 2)$ is $\Omega(nL)$. 
\end{example}

\begin{table}[htb]
\vspace{-6mm}
\begin{center}
 \begin{tabular}{|| c | c ||} 
  \hline
  Lie brackets &   Robots\\ [0.5ex] 
 \hline\hline
   $[h_2, g_1]$  &  4, 5\\
 \hline
$[h_3, g_1]$ & 4, 6 \\
 \hline
  $[h_1, g_2]$ & 2, 3\\
 \hline
    $[h_3, g_2]$ & 2, 6 \\
 \hline
    $[h_1, g_3]$ & 1, 3 \\
 \hline
 $[h_2, g_3]$ & 1, 5 \\
 \hline
\end{tabular} \hspace{1cm} \begin{tabular}{|| c | c ||} 
  \hline
  Lie brackets &   Robots \\ [0.5ex] 
 \hline\hline
 $[h_1+h_2-f_4, g_3]$  & 1\\
 \hline
$[h_1+h_3-f_4, g_2]$ & 2\\
 \hline
  $[f_4-h_3,g_2]$ & 3\\
 \hline
    $[h_2+h_3-f_4,g_1]$ & 4 \\
 \hline
    $[f_4-h_3,g_1]$ & 5 \\
 \hline
 $[f_4-h_2,g_1]$ & 6\\
 \hline
 \end{tabular}
\end{center}
%\vspace{-2mm}
\caption{Lie brackets for $m=4$ and robots that are moved by them.}
\vspace{-9mm}
\label{Tab:bracketeachrobot}
\end{table}

% \begin{table}[htb]
% %\vspace{-2mm}
% \begin{center}
%  \begin{tabular}{|| c | c ||} 
%   \hline
%   Lie brackets &   Robot\\ [0.5ex] 
%  \hline\hline
%  $[h_1, [h_2, g_3]]$  & $-c_1,-s_1$\\
%  \hline
% $[h_1, [h_3, g_2]]$ & $-c_2,-s_2$\\
%  \hline
%   $[h_1, [h_3, g_2]]- [h_2,[h_2, g_1]]$ & $-c_3,-s_3$\\
%  \hline
%     $[h_3, [h_2, g_1]]$ & $-c_4,-s_4$ \\
%  \hline
%     $[h_1, [h_2, g_3]] - [h_1, [h_1, g_3]]$ & $-c_5,-s_5$ \\
%  \hline
%  $[h_3, [h_2, g_1]] - [ h_2, [h_2, g_1]]$ & $-c_6, -s_6$\\
%   \hline\hline
%    $[h_2, g_1]$  & $(-s_4,c_4),(-s_5,c_5)$\\
%  \hline
% $[h_3, g_1]$ & $(-s_4,c_4),(-s_6,c_6)$\\
%  \hline
%   $[h_1, g_2]$ & $(-s_2,c_2),(-s_3,c_3)$\\
%  \hline
%     $[h_3, g_2]$ & $(-s_2,c_2),(-s_6,c_6)$ \\
%  \hline
%     $[h_1, g_3]$ & $(-s_1,c_1),(-s_3,c_3)$ \\
%  \hline
%  $[h_2, g_3]$ & $(-s_1,c_1),(-s_5,c_5)$\\
%  \hline
% \end{tabular}
% \end{center}
% %\vspace{-2mm}
% \caption{In the case of 6 robots in 4 groups, the first six rows indicate Lie brackets that move each individual robot ($r_k = 3$), while the last six rows show the Lie brackets of order $r_k = 2$. In the right column, only then non-zero elements of each Lie bracket vector field are shown; $c_i$ and $s_i$ correspond to $\cos{\theta_i}$ and $\sin{\theta_i}$ respectively, where $\theta_i$ is the orientation of robot $i$.}
% \vspace{-8mm}
% \label{Tab:bracketeachrobot}
% \end{table}

In short, motion planning is simplified if fewer robots are moved by a primitive; however, this approach compromises path efficiency. The computational complexity of motion planning increases exponentially as primitive order decreases and the motion of more robots is coupled; however, the robots move in parallel, so the paths are more efficient.

\section{Instantiations of the motion planning  problem}

The motion planning problem $\mathcal{M}(n, k)$ involves moving $n$ robots with primitives of order $k$. This section investigates some instantiations of the motion planning problem $\mathcal{M}(n, k)$. 

For a given set of $n$ robots, we use $m$ groups to control them. Depending on the chosen group allocation, the set of all primitives up to the order $k$ is defined as $\mathcal{P}^k = \bigcup_{i\le m} \{f_i\} \cup \bigcup_{2 \le i\le k} S^i(\mathcal{F}) \}$, where $S^i(\mathcal{F})$ are subgroups that correspond to Lie brackets of the order $i$. By choosing different subsets of $\mathcal{P}^{k}$, we will obtain different instantiations of the motion planning problem. 
% A motion planning problem instance focuses on higher-level planning based on the given primitives, rather than the lower-level execution of those primitives.
We next discuss possible implementations of primitives and then investigate several instances of the motion planning problem.

\subsection{Implementation of Primitives}
\label{Sect:Implementation}

Next, we compare two approaches for implementing the primitives: (a) numerical optimization, and (b) primitives designed by hand. In numerical optimization, we choose a large enough number of control steps and randomly decide which group is active during each control step. We then use numerical optimization to compute the time for which each group is activated (the distance for which robots in the group move) so that the total path length is minimized. This approach is similar to the approach described in~\cite{li2022group}. In contrast, hand-designed primitives follow the procedure described in the proof of Proposition~\ref{Thm:STLC}, where robots are eliminated one by one from the original group by using back-and-forth motion.

% \begin{figure}[t]
%     \centering
%     \includegraphics[width=0.7\linewidth]{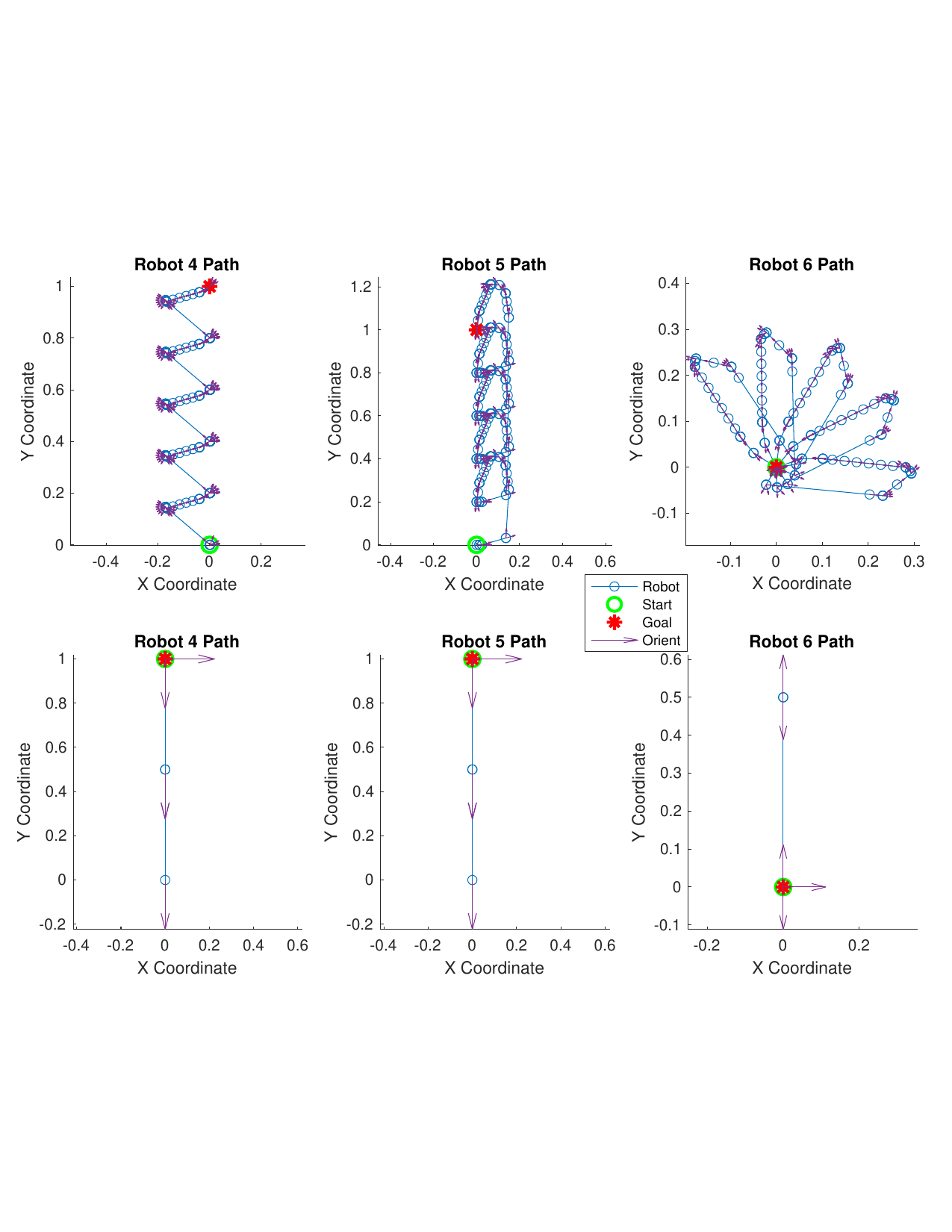}
%     \vspace{-2mm}
%     \caption{Motion planning problem $M(6,2)$ with the primitive corresponding to $[h_2,g_1]$. Top: numerical optimization; Bottom: hand-designed primitive.}
%     \label{Exper:primitives}
%     \vspace{-6mm}
% \end{figure}

 Figure~\ref{Exper:primitives} shows the results in the case of the motion planning problem $M(6,2)$, where we have 6 robots and 4 groups, and the particular motion primitive that we are interested in is the Lie bracket $[h_2, g_1]$\footnote{All the figures indicate the starting location of each robot with a green circle and the goal location with a red cross.}. This primitive moves robots 4 and 5 (Table~\ref{Tab:bracketeachrobot}). Therefore, the subgroup has 2 robots ($\{4,5\}$), compared to the original group $G_1$ that contains 3 robots (4, 5, and 6). The results of numerical optimization are shown on the left. The number of control steps was chosen to be $k=35$. A group is randomly selected for each control step, and then we use numerical optimization to compute the activation time for each group (length of motion) so that the total path length is minimized and that the robots move for some small distance $\epsilon$ in the direction of the target configuration. The distance to the target configuration was $1$, and $\epsilon$ was chosen to be $0.2$; this means that the computed primitive was repeated 5 times to reach the desired goal configuration. The right panel shows the hand-designed primitive using
% In contrast, in this paper, hand-designed primitives, involve an iterative application of a 3-step control sequence: in this paper. The sequence can be generalized as follows:
% \begin{align*}
%     & P^i_1(d) = f_i(d), \\
%     & P^i_2(2d) = (P^i_1(d), \operatorname{Rot}_{21,22,23}(\Delta\theta_{21}, \Delta\theta_{22}, \Delta\theta_{23}), P^i_1(d)), \\
%     & \ldots \\
%     & P^i_k(2^{k-1}d) = (P^i_{k-1}(2^{k-2}d), \operatorname{Rot}_{k1,k2,k3}(\Delta\theta_{k1}, \Delta\theta_{k2}, \Delta\theta_{k3}), P^i_{k-1}(2^{k-1}d)),
% \end{align*}
% where $f_i$ represents the control vector field for group $i$, and $\operatorname{Rot}$ performs a rotation of three robots at a time. As demonstrated in the proof of STLC, by carefully selecting the rotation angles, some robots can be moved forward while others are returned to their original positions. Each nested primitive is capable of moving at most two robots (given that in order to move one robot, we maintain its direction in every layer), and for systems involving more than two robots, a combination of nested primitives can be applied in sequence. 
%Continuing the earlier example with robots 4 and 5 in a subgroup,
the control vector field $f_1 = \left[\mathbf{0}_6, c_4, s_4, c_5, s_5, c_6, s_6, \mathbf{0}_3, \frac{1}{r}, \frac{1}{r}, \frac{1}{r}\right]^T$ (where $c_i\equiv \cos \theta_i$ and $s_i\equiv \sin \theta_i$). The primitive is defined as $P^1_2 = (f_1, \operatorname{Rot}_{r_4, r_5, r_6}(0, 0, \pi), f_1)$. The operation $\operatorname{Rot}_{r_4, r_5, r_6}(0, 0, \pi)$ rotates robot 6 by $\pi$ while keeping robots 4 and 5 in place. This rotation involves the rotational vector fields $h_i$ as described in Section~\ref{Sect:Rotation}. 
All robots start from the origin and are oriented along the positive $x$-axis. It can be seen that the hand-designed nested primitives in the right panel yield a much simpler path. Moreover, solving optimization problems in high dimensions is challenging.

\begin{figure}[t]
\begin{center}
     \includegraphics[width=0.3 \textwidth]{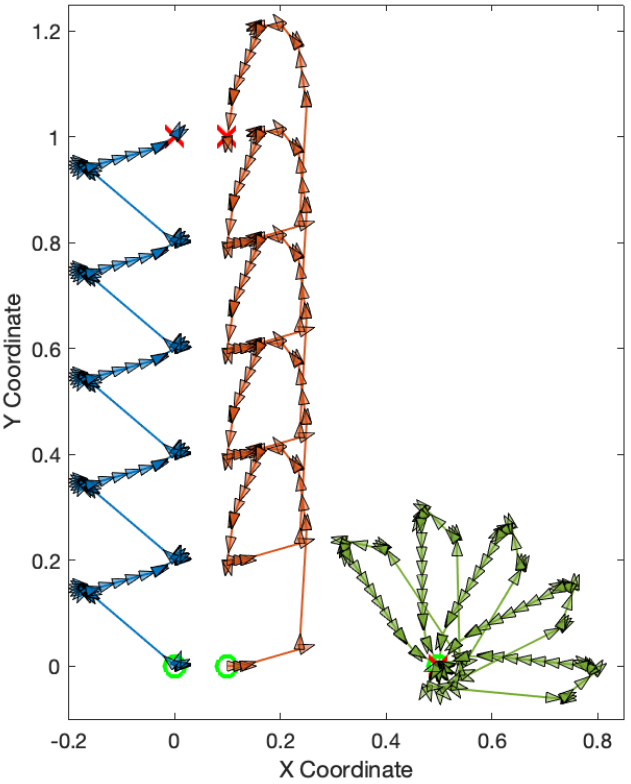} \hspace{5mm}
         \includegraphics[width=0.3 \textwidth]{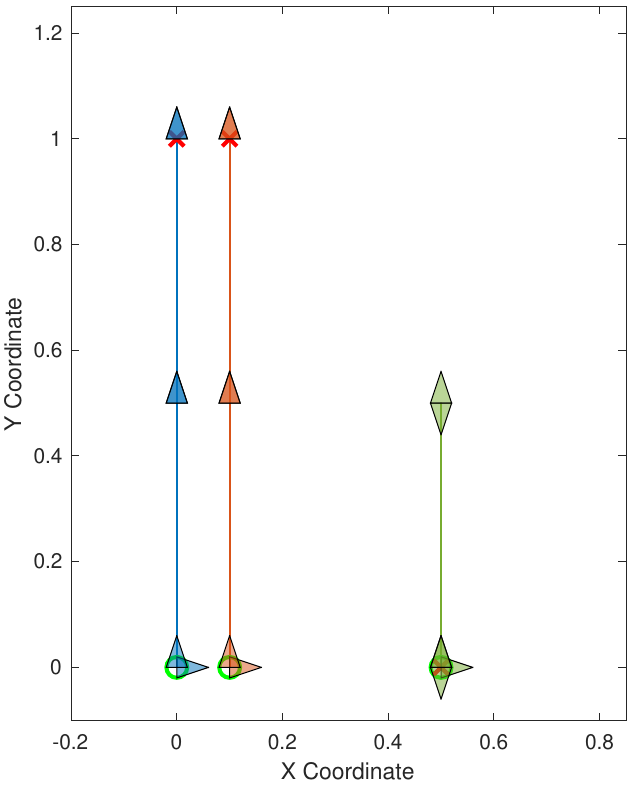}
\end{center}
\vspace{-5mm}
        \caption{Motion planning problem $M(6,2)$ with the primitive corresponding to $[h_2,g_1]$ (robots 4 and 5 move, robot 6 moves back-and-forth). Left: numerical optimization. Right: hand-designed primitive.}
        \vspace{-5mm}
        \label{Exper:primitives}
\end{figure}

\subsection{Motion planning Cases}

Next, we investigate time complexity and efficiency trade-offs for our system's motion planning problem. Our baseline motion planning algorithm is Rapidly-Exploring Random Trees (RRT)~\cite{lavalle_randomized_2001}. RRT is a widely used sampling-based algorithm that constructs a tree of feasible trajectories by incrementally extending it from an initial state toward randomly sampled points in the configuration space. The tree grows by iteratively sampling the points (nodes) and connecting them to the nearest node in the existing tree using a local planner. In our formulation, the local planner uses subgroups. In some cases, it further simplifies the problem by rotating the robots in the subgroup so they are directed toward the sampled points before using the motion primitive corresponding to the subgroup.

\begin{figure}[t]
\begin{center}
\includegraphics[width=0.32\textwidth]{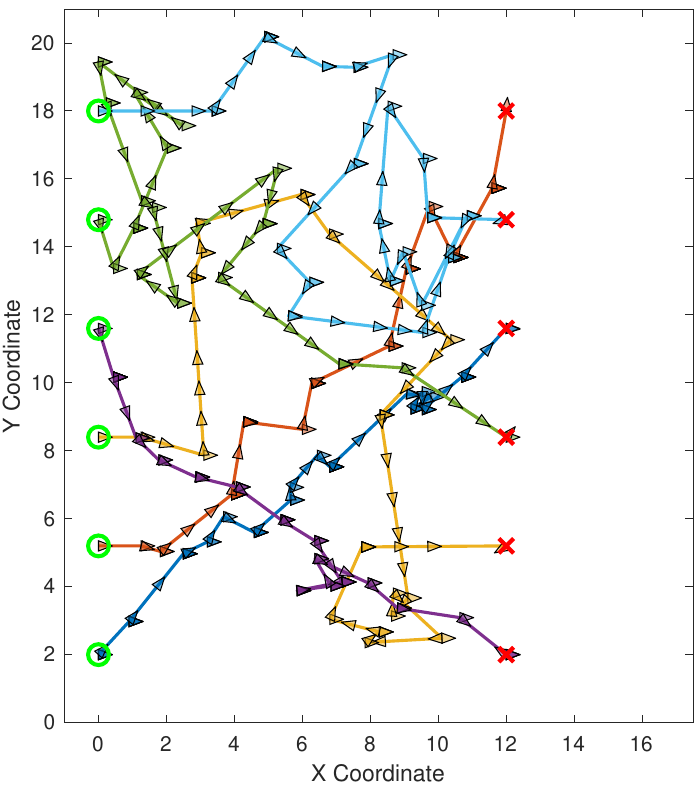} \hfill
\includegraphics[width=0.32\textwidth]{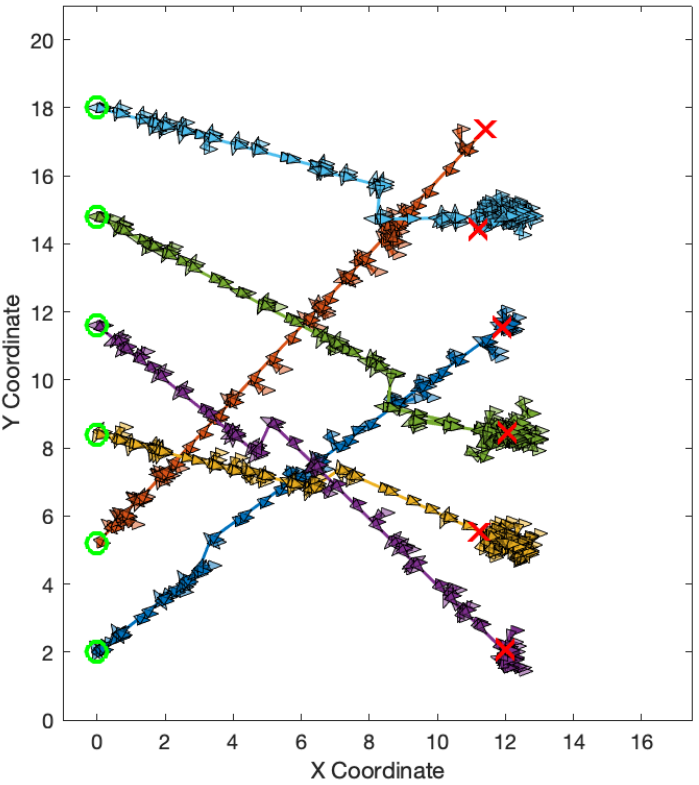} \hfill
\includegraphics[width=0.32\textwidth]{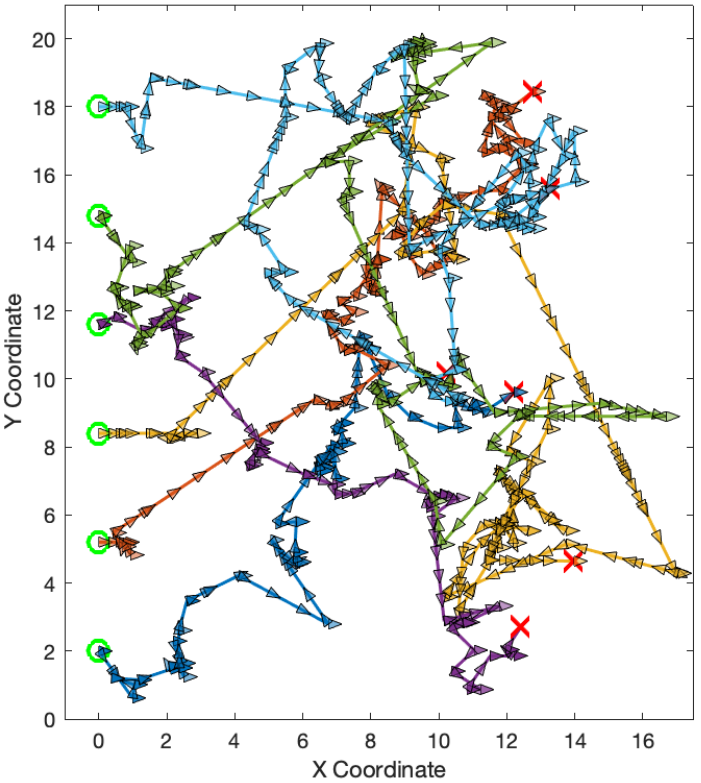}\\[-9mm]
\makebox[0.32\textwidth][r]{\scriptsize a\hspace{2mm}} \hfill
\makebox[0.32\textwidth][r]{\scriptsize b\hspace{2mm}} \hfill
\makebox[0.32\textwidth][r]{\scriptsize c\hspace{2mm}}
\end{center}
%\vspace{-5mm}
\caption{EC1: (a) Numerical optimization (exact solution); (b) RRT with rotation; (c) Original RRT.}
\label{exper:6robot_pureplan}
\vspace{-6mm}
\end{figure}

% \begin{figure}
%     \centering
%    \subfloat[]{%
%         \includegraphics[width=0.30\linewidth]{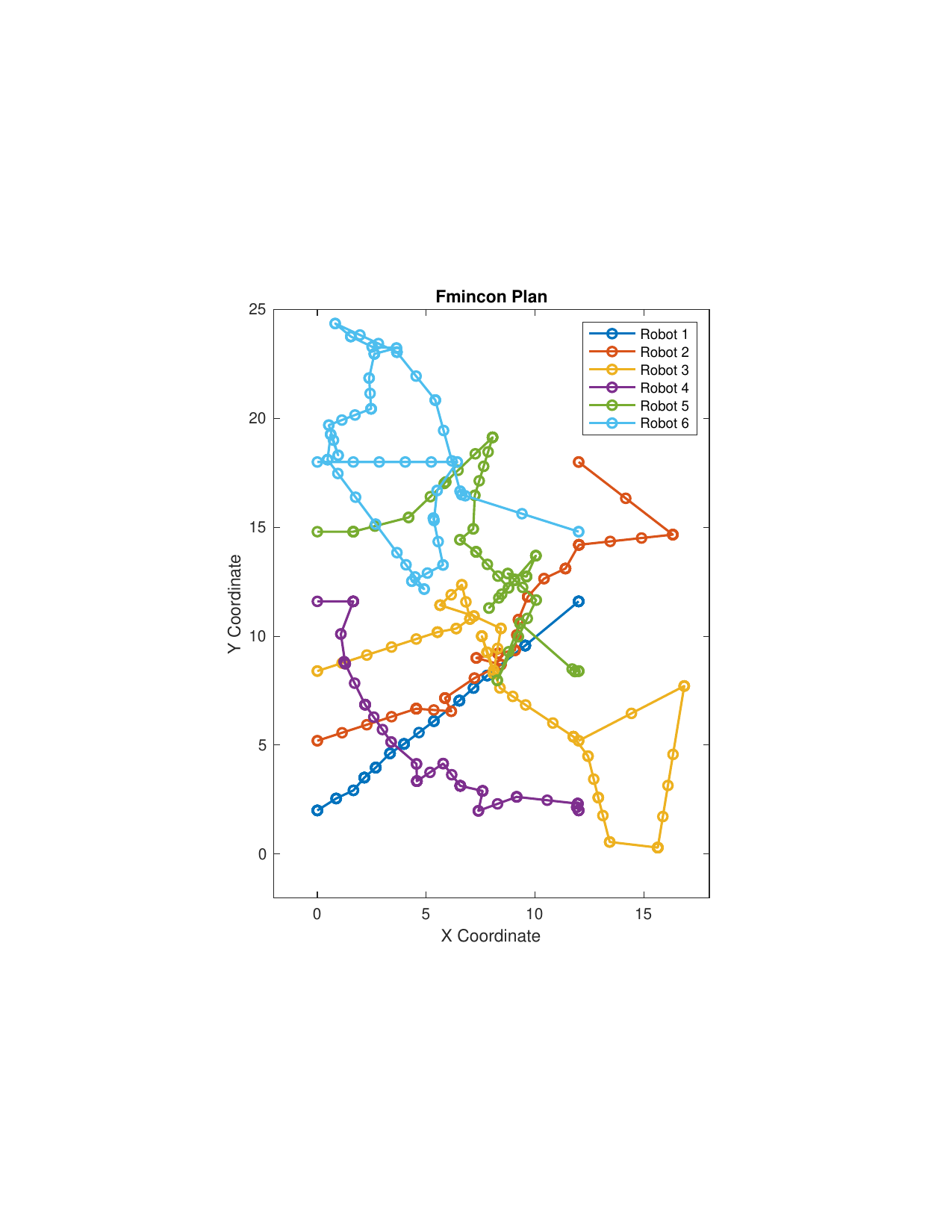}}
%   \subfloat[]{%
%         \includegraphics[width=0.34\linewidth]{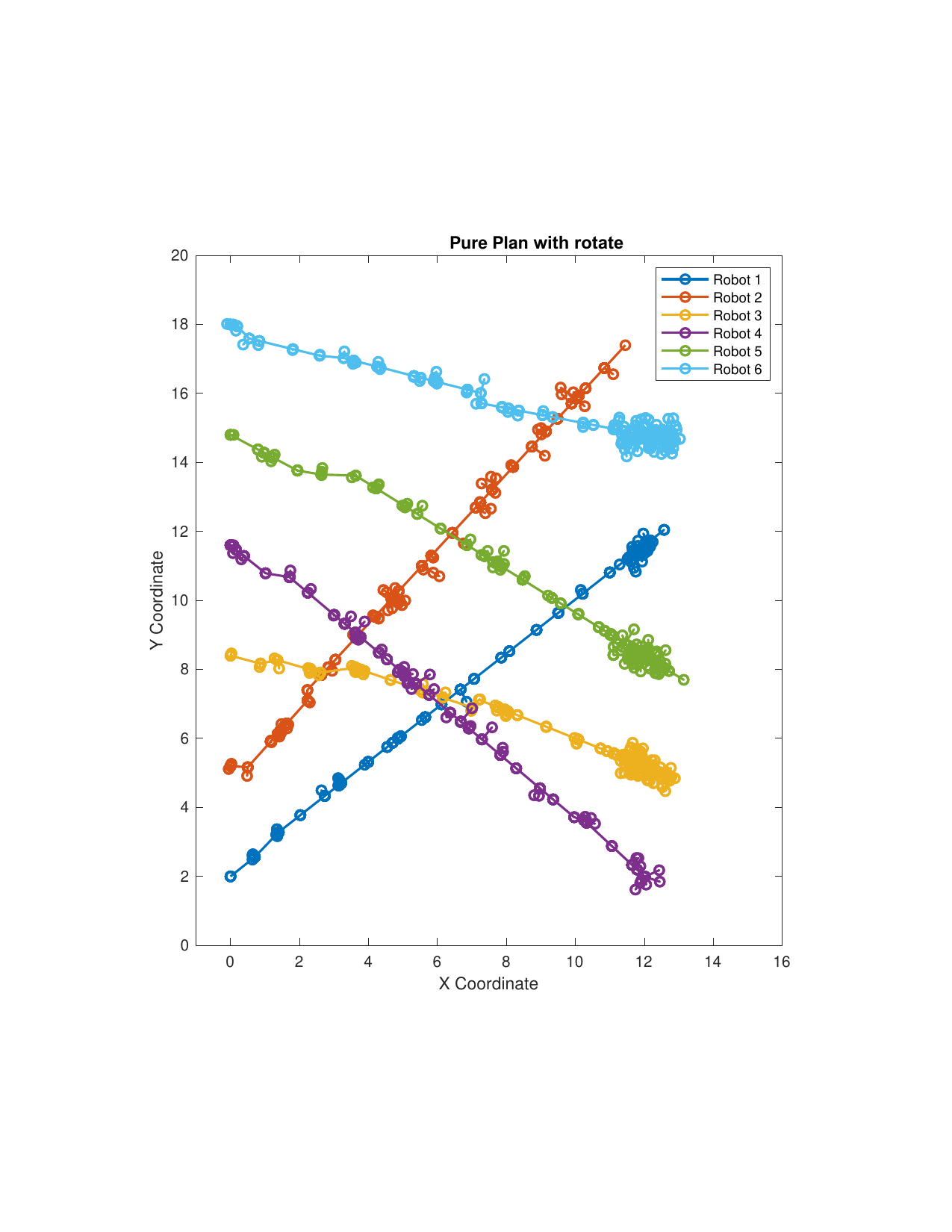}}
%   \subfloat[]{%
%         \includegraphics[width=0.35\linewidth]{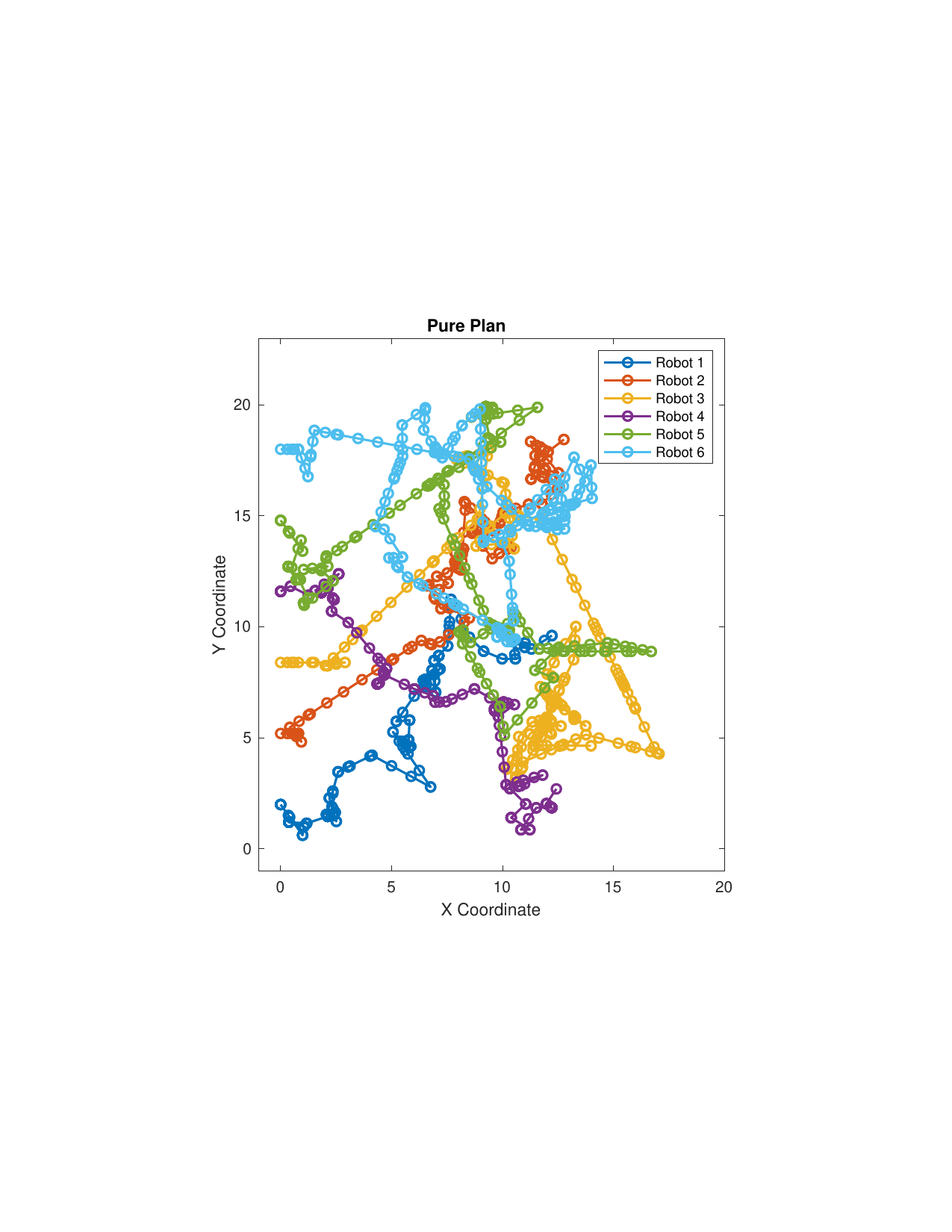}}
%     \caption{(a) Physical path for a fmincon plan (solve exactly); (b) Physical path for a RRT pure plan with controlling rotate of 3 robots; (c) Physical path for a RRT pure plan with original group swi ches. RRT terminates at a neighborhood radius of 2.}
%     \label{fig:enter-label}
% \end{figure}

\begin{table}[b]
\vspace{-4mm}
\begin{center}
    \begin{tabular}{|r|r|r|r|r|}
        \hline
        \textbf{Algorithm} & \textbf{Runtime} & \textbf{RRT Nodes}& \textbf{Path Length} & \textbf{Execution}  \\
        \hline
        Numerical Opt. & 0.80s & NA & 181.04 & 76.33s \\
        \hline
        RRT with rot. & 8.88s  & 147.20 &  323.57 & 172.36s\\
        \hline
        Original RRT$^2$ & 393.53s  & 7736.80 &  332.10 & 110.70s\\
        \hline
    \end{tabular}    
\end{center}
%    \vspace{2mm}
    \caption{EC1 Algorithm Performance (average of 20 instances). Algorithm 1 is exact, Algorithms 2 and 3 terminate within a neighborhood of radius 2. $^2$No solution was found for Original RRT in 15 out of 20 cases within the allotted time.}
    \label{Tab:RRTvsFmincon}
%    \vspace{-10mm}
\end{table}

\subsubsection{Extreme Case 1 (EC1): Pure planning}
In this scenario, there are no additional subgroups; only the original groups are used. Thus, each robot belongs to more than one group, which means the robot motions are coupled. Figure~\ref{exper:6robot_pureplan} compares numerical optimization (see Section~\ref{Sect:Implementation}) and RRT planning in the case of 4 groups and 6 robots. Numerical optimization does not consider collisions but finds the path that reaches the target exactly. Both RRT with rotation (Fig.~\ref{exper:6robot_pureplan}b) and Original RRT (Fig.~\ref{exper:6robot_pureplan}c) use the original groups as local planners and find a path that reaches the target within a certain neighborhood radius. The difference is that RRT with rotation aligns the robots with the intermediate goal point before moving them, while Original RRT does not, so it searches in higher-dimensional space. Initially, the robots are deployed in the range $y = [0,10]$ along $x = 0$, and the final goal positions are along $x = 15$, with the $y$ coordinates permuted randomly. Specifically, robots have start positions $[(0,2.0),(0,5.2),(0,8.4),(0,11.6),(0,14.8),(0,18.0)]^T$ and goal positions $[(12, 11.6),(12, 18.0),(12, 5.2),(12, 2.0),(12, 8.4),(12, 14.8)]^T$. The density of the robot icons on the path shows the efficiency of the motion. Numerical optimization provides an efficient path and icons are sparse. RRT with rotation has clusters of icons on the path; each cluster illustrates how robots move when we only control the orientations of some robots. Pure RRT has a high density of icons on the path, which means it is less efficient.
Table~\ref{Tab:RRTvsFmincon} compares the performance of these three algorithms in terms of computation, path length, and execution time. The results are reported for an average of 20 trials. Numerical optimization is the fastest and produces the shortest path. Unfortunately, it becomes computationally prohibitive for collision avoidance and in complex environments. As for the other two algorithms, RRT with rotation has a much faster runtime compared to RRT, but it incurs a penalty in the execution time.

\begin{figure}[t]
\begin{center}
        \includegraphics[width=0.32\textwidth]{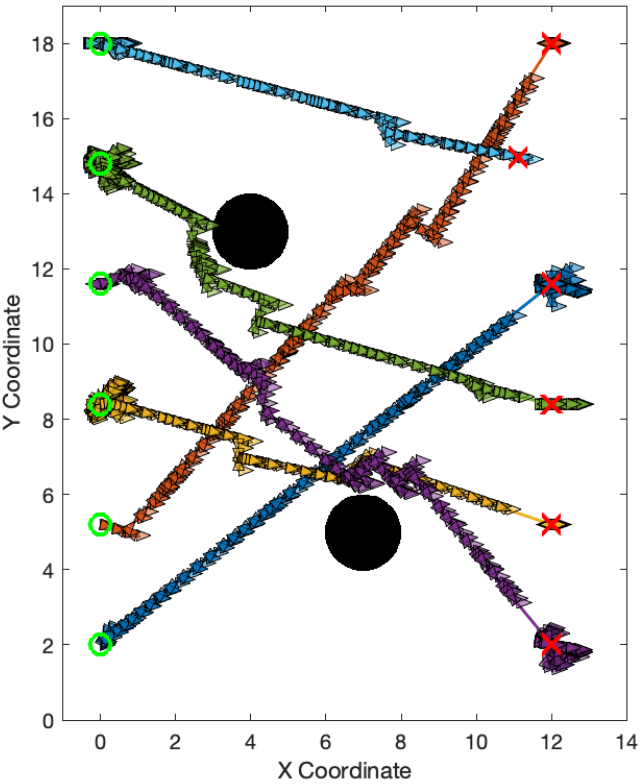} \hfill
        \includegraphics[width=0.32\textwidth]{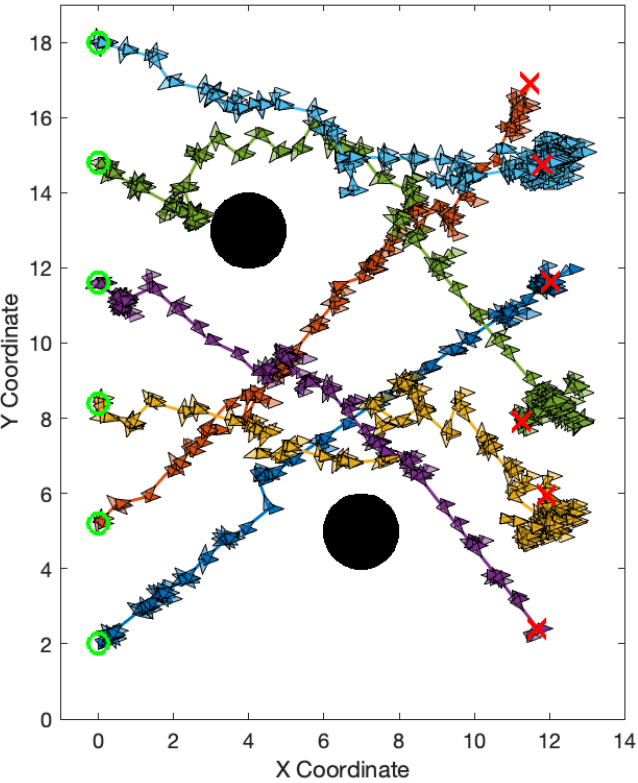} \hfill
        \includegraphics[width=0.32\textwidth]{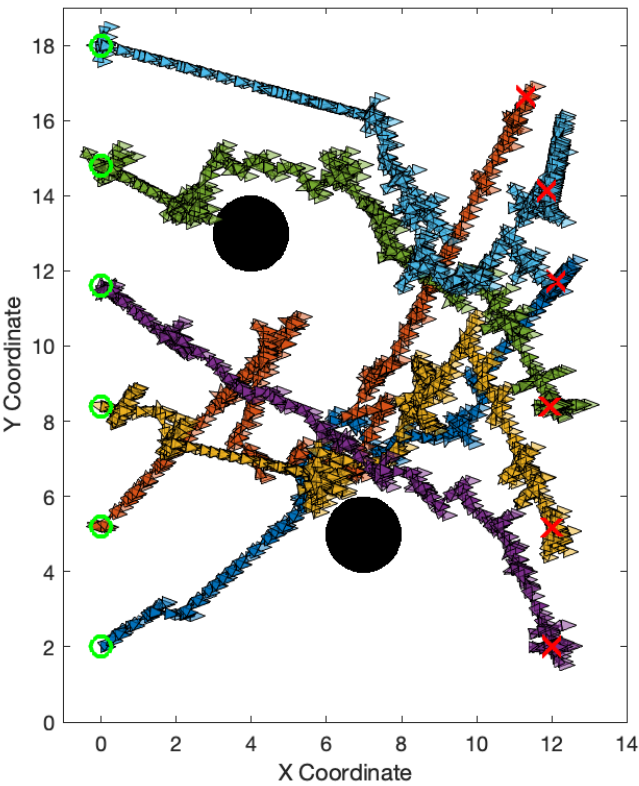} \\[-9mm]
\makebox[0.32\textwidth][c]{\scriptsize a (EC2)} \hfill
\makebox[0.32\textwidth][c]{\scriptsize b (EC1)} \hfill
\makebox[0.32\textwidth][c]{\scriptsize c (IC3)}       \\[6mm]
        \includegraphics[width=0.32\textwidth]{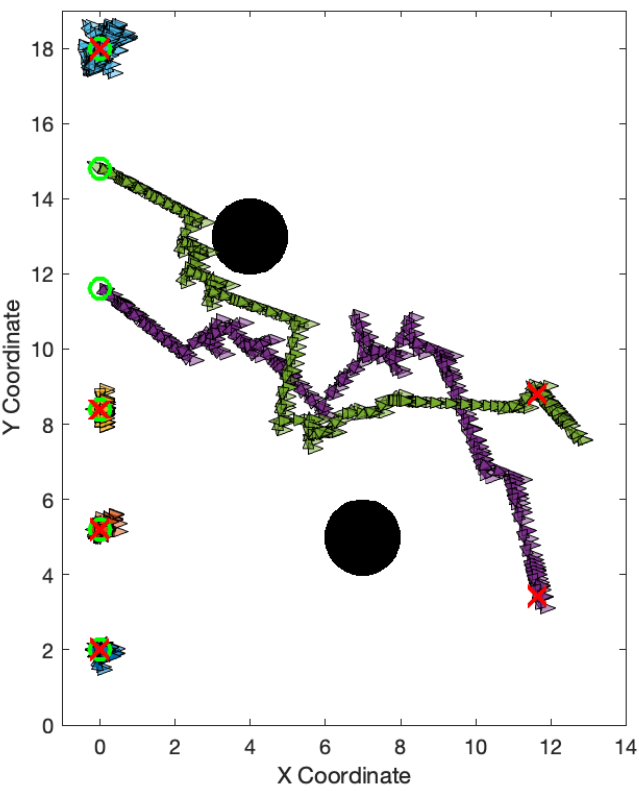} \hfill
        \includegraphics[width=0.32\textwidth]{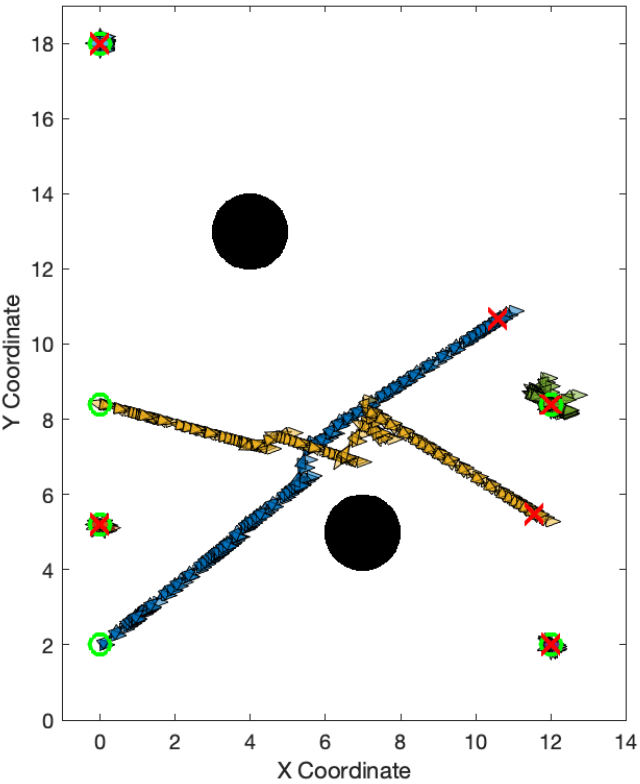} \hfill
        \includegraphics[width=0.32\textwidth]{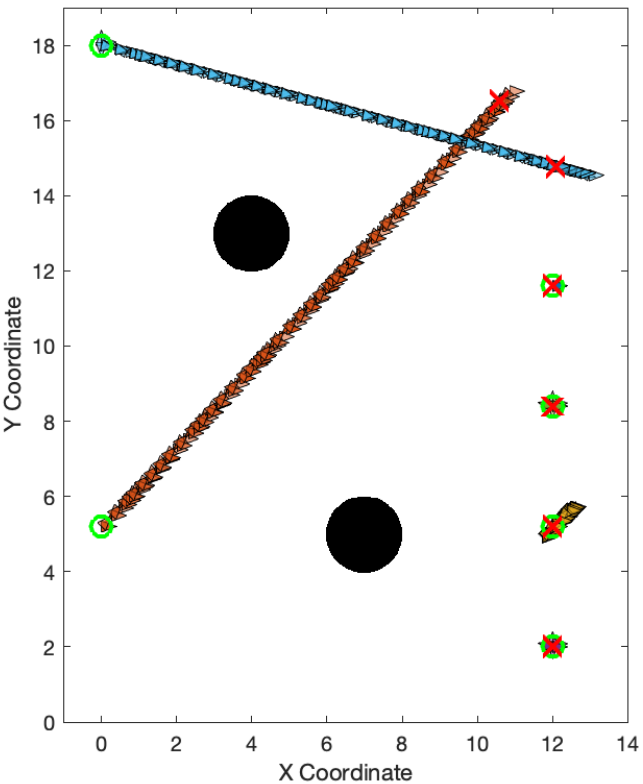}\\[-9mm]
\makebox[0.32\textwidth][c]{\scriptsize d (IC4 4, 5)} \hfill
\makebox[0.32\textwidth][c]{\scriptsize e (IC4 1, 3)} \hfill
\makebox[0.32\textwidth][c]{\scriptsize f (IC4 2, 6)}       \\
\end{center}
  \caption{(a) EC2: Pure control; (b) EC1: Pure planning using RRT with rotation; (c) IC3: Parallel motion; (d) IC4: Robots 4 and 5; (e) IC4: Robots 1 and 3; and (f) IC4: Robots 2 and 6.}
\label{exper:RRT6robot_obstable}
\vspace{-6mm}
\end{figure}

\begin{table}[h]
\begin{center}
    \begin{tabular}{|r|r|r|r|r|}
        \hline
        \textbf{Case} & \textbf{RRT Runtime} & \textbf{RRT Nodes}& \textbf{Path Length} & \textbf{Execution}  \\
        \hline
        RRT with rotation$^3$  & 1141.38s  & 8242.75 & 384.62 & 200.57s \\
        \hline
        Subgroup parallel & 216.15s  & 2218.01 &  868.66 & 886.04s \\
        \hline
        Subgroup sequential & 15.38s & 102.10  & 529.47  & 577.41s \\
        \hline
        Pure control & 19.44s & 175.50 & 871.82 & 1042.47s \\
        \hline
    \end{tabular}
    \end{center}
%    \vspace{-3mm}
     \caption{Performance of the four motion planning algorithms with obstacles (average of 10 instances). All algorithms terminate within a neighborhood of radius 1.5.  $^3$No solution was found for Original RRT in 8 out of 10 cases within the allotted time.}
       \label{Tab.RRTcases}
    \vspace{-8mm}
\end{table}

\subsubsection{Extreme Case 2 (EC2): Pure control}
In this scenario, each subgroup contains a single robot. The planning problem involves guiding individual robots to the goal. The primitives are of order $r=3$ and they move one robot at a time. Figure~\ref{exper:RRT6robot_obstable}a shows a sample path for 6 robots in 4 groups. The environment is the square $[0,20]\times[0,20]$ containing two black circular obstacles. The icon density is high on the path, and each repeated icon pointing in the same direction indicates the back-and-forth motion in primitives.
 % When we come to the robots size of $14, 30$  solving fmincon take a average time $46$ seconds sss second respectively and the path is no longer remains in the neighborhood of its initial positions, while as the nested primitives is pre hand-designed, it can solves in seconds. 

\subsubsection{Intermediate Case 3 (IC3): Parallel planning}

In this case, we explore the trade-off between the path execution time and RRT planning computation time. Figure~\ref{exper:RRT6robot_obstable}c presents an example in which we use $3$ disjoint subgroups, each containing $2$ different robots. These subgroups are inspired by the Lie brackets $[h_2, g_1]$, $[h_3, g_2]$ and $[h_1, g_3]$. This experiment was repeated $10$ times, with an average planning time of $216.15$s and an execution time of $886.04$s.

\subsubsection{Intermediate Case 4 (IC4): Sequential planning}
 
In contrast to IC3, here we plan the motion of each subgroup sequentially. This effectively reduces the dimensionality of planning problems.  Figures~\ref{exper:RRT6robot_obstable}d-f show the motions where 6 robots were divided into 3 subgroups using Lie brackets $[h_2,g_1]$, $[h_1,g_3]$ and $[h_3, _2]$. Compared with Fig.~\ref{exper:RRT6robot_obstable}c, the average planning time, path length, and execution time decrease to $19.44$s, $529.47$, and $577.41$s, respectively, under the same RRT parameters.

Table~\ref{Tab.RRTcases} compares the performance of different strategies for solving the motion planning problem in the presence of obstacles. RRT generates the most efficient path, but it takes considerably longer to find it. On the other hand, pure control is very efficient in finding the path, but the path is inefficient. It can be seen that the sequential plan provides an excellent compromise between the computation time and the efficiency of the generated path. For large swarms, sequential planning is computationally much more efficient than parallel planning.

\section{Conclusion}

This paper investigates how a novel group-control framework can be used for motion planning for microrobot swarms controlled by a global field. We focus on a swarm of MicroStressBots, electrostatically actuated MEMS stress microrobots. We show that the system is Small-Time Locally Controllable (STLC) in positions even though the control inputs are unilateral. We also demonstrate that the minimum number of groups to achieve STLC for $n$ robots is $log_2(n + 2) + 1$. We study the complexity of the motion planning problem for a MicroStressBot swarm under group-based control. In particular, we compare the trade-off between the complexity of control, the complexity of motion planning, and the efficiency of the generated path. We introduce the notion of motion primitives and subgroups, which allows us to balance these different factors. We show that subgroups significantly simplify the motion planning problem and make the approach applicable to larger swarm sizes. Simulations confirm the effectiveness of the framework, suggesting significant potential for applications requiring microrobot coordination, such as microassembly, tissue engineering, and drug delivery.
%Future work will explore more complex scenarios, larger swarm sizes, and the design of more efficient primitives.

%\bibliographystyle{splncs04}
%\bibliography{references}
\newpage
\bibliographystyle{splncs04}
\bibliography{references,additional}

% \section{Numerical Simulations}
% \subsection{Numerical control results}
% \subsection{Motion planning}
% \SL{Add results for various order of primitives}

\end{document}